\documentclass{article}

\usepackage{microtype}
\usepackage{graphicx}
\usepackage{subcaption}
\usepackage{booktabs} 
\usepackage{hyperref}

\usepackage[accepted]{icml2026}

\usepackage{amsmath}
\usepackage{amssymb}
\usepackage{mathtools}
\usepackage{amsthm}
\usepackage{comment}

\usepackage[capitalize,noabbrev]{cleveref}

\theoremstyle{plain}
\newtheorem{theorem}{Theorem}[section]

\newtheorem{lemma}[theorem]{Lemma}

\theoremstyle{definition}

\theoremstyle{remark}

\newtheorem*{remark_non}{Remark}

\usepackage[textsize=tiny]{todonotes}

\usepackage{booktabs}

\newcommand{\sh}{\textcolor{black}}

\newcommand{\E}[1]{\mathbb{E}{\left[#1\right]}}

\newcommand{\C}{\ensuremath{\tfrac{1}{0.5+\sqrt{1+k^2} \cdot \max_{\hat{p} \in [0,1]}\epsilon(\hat{p})}}}
\usepackage{bbm}
\usepackage{subcaption} 

\icmltitlerunning{Testing For Distribution Shifts with Conditional Conformal Test Martingales}

\begin{document}

\twocolumn[
  \icmltitle{Testing For Distribution Shifts with Conditional Conformal Test Martingales}



  \icmlsetsymbol{equal}{*}



  \begin{icmlauthorlist}
    \icmlauthor{Shalev Shaer}{EEIIT}
    \icmlauthor{Yarin Bar}{CSIIT}
    \icmlauthor{Drew Prinster}{CSJH}
    \icmlauthor{Yaniv Romano}{EEIIT,CSIIT}
  \end{icmlauthorlist}

  \icmlaffiliation{EEIIT}{Department of Electrical and Computer Engineering, Technion IIT, Israel}
  \icmlaffiliation{CSIIT}{Department of Computer Science, Technion IIT, Israel}
  \icmlaffiliation{CSJH}{Department of Computer Science, Johns Hopkins University, Baltimore, MD, USA}

  \icmlcorrespondingauthor{Shalev Shaer}{shalev.shaer@campus.technion.ac.il}

  \icmlkeywords{Machine Learning, ICML}

  \vskip 0.3in
]



\printAffiliationsAndNotice{}  

\begin{abstract}

We propose a sequential test for detecting arbitrary distribution shifts that allows conformal test martingales (CTMs) to work under a fixed, reference-conditional setting. Existing CTM detectors construct test martingales by continually growing a reference set with each incoming sample, using it to assess how atypical the new sample is relative to past observations. While this design yields anytime-valid type-I error control, it suffers from \emph{test-time contamination}: after a change, post-shift observations enter the reference set and dilute the evidence for distribution shift, increasing detection delay and reducing power.

In contrast, our method avoids contamination by design by comparing each new sample to a \emph{fixed} null reference dataset. Our main technical contribution is a robust martingale construction that remains valid conditional on the null reference data, achieved by explicitly accounting for the estimation error in the reference distribution induced by the finite reference set. This yields anytime-valid type-I error control together with guarantees of asymptotic power one and bounded expected detection delay. Empirically, our method detects shifts faster than standard CTMs, providing a powerful and reliable distribution-shift detector.

\end{abstract}
\section{Introduction}
\label{sec:intro}

Detecting when a data stream deviates from a reference distribution is critical for maintaining the reliability of deployed systems, from medical monitoring to computer vision and systems powered by large language models. For example, the performance of machine learning systems can degrade when a pre-trained model encounters new environments at test time. In such cases, monitoring tools can be used to detect when the test-time inputs have shifted relative to the training distribution, triggering actions such as model retraining, additional data collection, or human intervention.

To formalize the problem, let $\{Z_1^{\text{in}}, ..., Z_n^{\text{in}}\}$ denote a set of $n$ i.i.d., possibly high-dimensional reference samples from some unknown distribution $P_Z$, and let $\hat{s}$ denote some function used for scoring or evaluating the data, such as the confidence of a deployed classifier on a test image \citep{hendrycks2017baseline,barprotected}, the likelihood of observed covariates under a pre-trained density estimator \citep{ren2019likelihood}, the verbalized confidence of a large language model \citep{linteaching}, etc. As is common in the literature, we assume the output of $\hat{s}$ is a one-dimensional real number, and in particular denote the score for the $i$th data sample as $X_i^\text{in}:=\hat{s}(Z_i^\text{in})$, so $D_0 = \{X_1^\text{in}, \ldots, X_n^\text{in}\}$ denotes the scores for the $n$ reference samples, which constitute $n$ i.i.d. draws from an unknown distribution $P$.

Our goal is to detect whether an observed sequence of test samples $Z_t$, for $t = 1, 2, \ldots$, deviates from this reference distribution, $P_Z$.
Since the scoring function $\hat{s}$ is fixed, any distributional shift in the scores $X_1,...,X_t$ implies a shift in the raw data $Z_1,...,Z_t$. Consequently, we cast the problem as detecting deviations in the univariate sequence $X_1,...,X_t$.
Formally, given $D_0$, we aim to sequentially test the following null hypothesis:
\begin{equation}
    \label{eq:null}
    H_0: X_t \overset{\text{i.i.d.}}{\sim} P, \ \ \forall \ t\geq 1.
\end{equation}
To enable reliable monitoring in an online setting, the test must be \emph{anytime-valid}: it should control the type-I error at a user-specified level $\alpha \in (0,1)$ (e.g., $5\%$) simultaneously over all times $t$. Here, the type-I error is the probability of incorrectly rejecting $H_0$ when it is in fact true. At the same time, we aim for a \emph{powerful} test that rejects the null as quickly as possible once $H_0$ is false; for example due to a sudden shift (as in change-point detection) or a gradual drift that may begin slowly and evolve over time.

In this work, we build on \emph{testing by betting}, in which statistical evidence for distribution shift is quantified by the wealth of a gambler who repeatedly bets against the null hypothesis as data arrive. Loosely speaking, the bettor's goal is to place bets whose gains increase when new scores deviate from what would be expected under $P$. Crucially, the game is formulated such that it is unlikely that the bettor would be able to increase their wealth significantly if the null holds (i.e., there is no shift). Formally, the gambler's wealth over time defines a nonnegative martingale (or e-process), so that large values are interpreted as strong evidence against $H_0$. This idea has been instantiated in several settings, where the ones closest to our work are tests of exchangeability and change-point detection based on conformal test martingales (CTMs) 
\citep{vovk2021testing, ramdas2022testing, saha2024testing, fischer2025sequential}. 

A key feature of CTM-based methods is that they achieve anytime-valid type-I error control by continuously updating the reference set to include all observed points as they arrive, so that $D_t = D_0 \cup \{X_1, \ldots, X_t\}$. At time $t$, the bet---placed on a conformal p-value---reflects how atypical the new score $X_t$ is relative to the ones in $D_t$. We argue that computing the bets using data gathered during testing makes the CTM-based procedure inherently susceptible to ``test-time contamination.'' After a distribution shift, the reference pool $D_t$ progressively fills with post-shift observations, so later post-shift scores are evaluated relative to a mixture of null and alternative samples and can appear less atypical than the earliest post-shift scores. 

This test-time contamination phenomenon has been observed empirically \citep{podkopaev2022tracking}, and is demonstrated in Figure~\ref{fig:synth_pval_contamination}, which displays the running mean of conformal p-values (per-sample shift evidence) over time. As portrayed, the post-shift standard CTM p-values initially spike, reflecting high evidence. However, as the post-shift data contaminates the growing reference set $D_t$, the mean p-value gradually decays back towards the in-distribution level (0.5), indicating that the method progressively perceives the shifted data as normal. This contamination can dilute the evidence, reduce statistical power, and slow down detection. 
Indeed, although powerful in practice, CTM-based methods do not come with general theoretical guarantees of consistency. There is no assurance that the test will eventually reject the null in the presence of a distribution shift \citep{saha2024testing}.

\begin{figure}[t]
\centering

\includegraphics[width=0.5\textwidth]{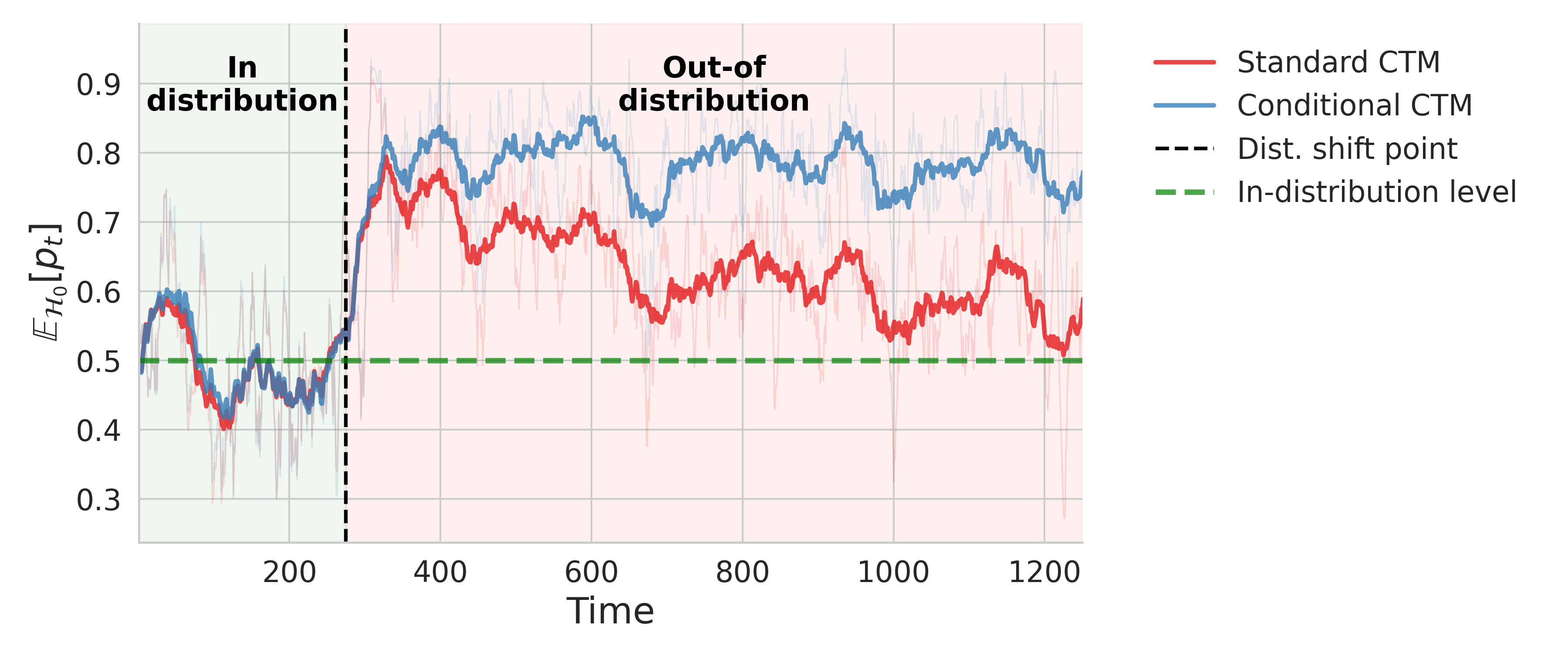}

\caption{\textbf{Running mean of conformal $p$-values over time.} 
Comparison of conformal $p$-values (mean over 10 values) under a change-point introduced at timestep $300$. 
A $p$-value near $0.5$ indicates inability to detect distributional change; 
thus, the decay of $p$-values toward $0.5$ reflects increasing contamination of $D_t$ over time. Exact details are provided in Appendix~\ref{appdx:synth_pval}. 
}
\label{fig:synth_pval_contamination} 
\end{figure}

\begin{figure}[t]
\centering
\begin{subfigure}[b]{0.245\textwidth} 
    \includegraphics[width=\textwidth]{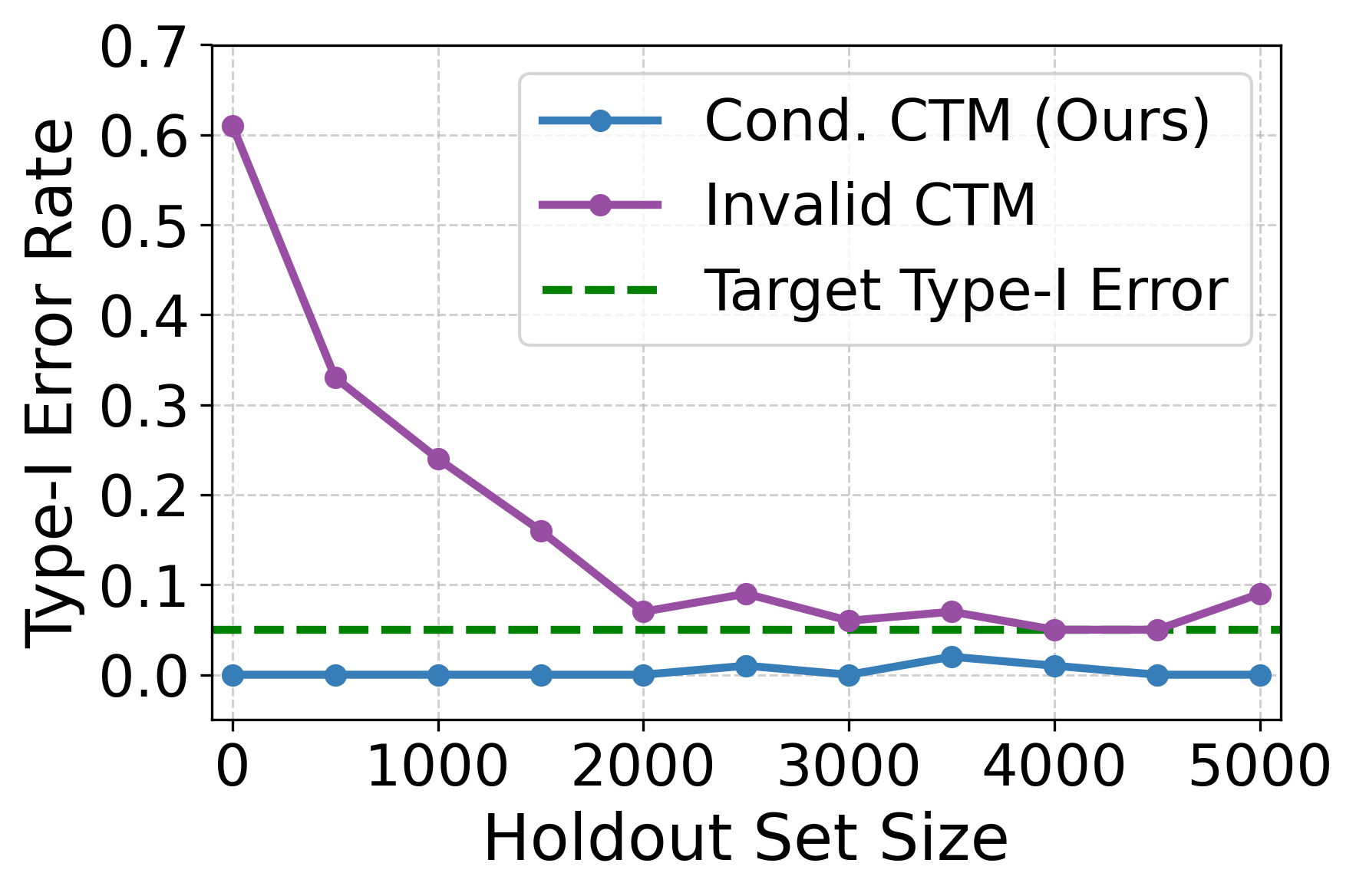}
\end{subfigure}
\begin{subfigure}[b]{0.245\textwidth} 
    \includegraphics[width=\textwidth]{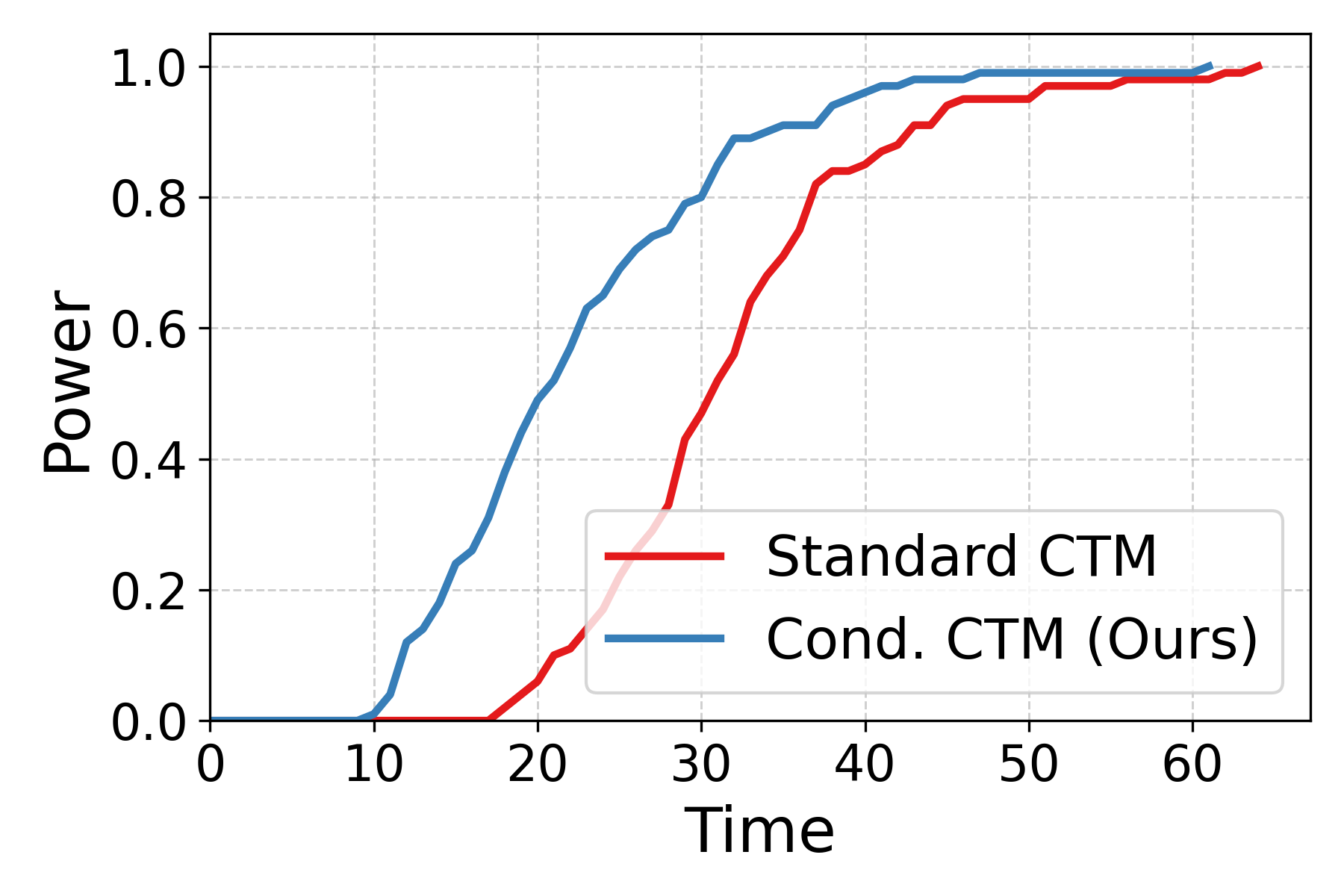}
\end{subfigure}

\caption{\textbf{Type-I error control under finite reference samples.} (left): Comparison of the Type-I error rate, evaluated over 100 repetitions, between an invalid CTM (purple) and our proposed method (blue) for varying reference set sizes $n$. The dashed line represents the test level $\alpha=0.05$. \textbf{Power under an alternative.} Cumulative power over time, evaluated over 100 repetitions, for detecting an immediate shift (at time step $0$) from $\mathcal{N}(0,1)$ to $\mathcal{N}(1,1)$, with reference size $n=2000$. Our method (blue) achieves higher power and faster detection compared to the CTM (red). Exact details are provided in Appendix~\ref{appdx:fig_1}.
}
\label{fig:motivation_combined} 
\end{figure}

\subsection*{Preview of Our Method and Key Contributions}

In this work, we also adopt the perspective of \emph{testing by betting} with CTMs. In contrast to existing CTM-based approaches that update their bets using the growing pool of data $D_t = D_0 \cup \{X_1, \ldots, X_t\}$, we consider a gambler who repeatedly places bets on the (p-values of the) incoming scores $X_1, X_2, \ldots$ using only the fixed reference data $D_0$ as a guide. As a result, our \emph{reference-conditional CTM} approach avoids test-time contamination by design: evidence for distribution shift is computed by comparing test-time scores to the fixed reference data $D_0$, and test-time observations are never incorporated into this reference set. This advantage is illustrated in Figure~\ref{fig:synth_pval_contamination}, where our post-shift conditional CTM p-values remain high, in stark contrast to the decaying evidence of the standard CTM p-values.

While our approach eliminates test-time contamination, the use of a fixed reference $D_0$ introduces a subtle but crucial challenge: test points are repeatedly compared to the same finite reference set, and these repeated comparisons induce dependencies that break the i.i.d. structure assumed under the null \citep{bates2023testing}. Indeed, a naive implementation of CTM that places bets using only the fixed reference scores in $D_0$ (rather than on a growing, random reference pool $D_t$ as in standard CTM constructions) leads to inflation of the type-I error rate. This is illustrated in the left panel of Figure~\ref{fig:motivation_combined} (see the `Invalid CTM' curve), where the lack of error control is more severe when the reference dataset $D_0$ is relatively small.

By contrast, if we had access to the true null distribution $P$, we could directly form an anytime-valid test. Specifically, applying the probability integral transform (PIT) using the true CDF of $P$ yields p-values that are guaranteed to be i.i.d. uniform under the null, thereby circumventing the dependency issues caused by the finite reference set and enabling standard betting against uniformity. In reality, however, the true CDF is unknown and should be estimated using $D_0$. Consequently, we must explicitly address the estimation uncertainty that arises from this finite sample approximation. Technically, to quantify how atypical $X_t$ is relative to $D_0$, we compute a conformal $p$-value using the empirical cumulative distribution function (ECDF) of $D_0$ that estimates the null distribution $P$. We then bound the estimation error of the ECDF by deriving upper and lower confidence bands (e.g., via the Dvoretzky–Kiefer–Wolfowitz inequality), ensuring that the population CDF lies within this band with high probability. This, in turn, allows us to construct a betting martingale that is robust to estimation errors of the null distribution, rigorously controlling the type-I error for all time steps at level $\alpha$ with high probability (over the draw of $D_0$). As shown in Figure~\ref{fig:motivation_combined}, our reference-conditional CTM approach controls the type-I error while demonstrating higher power with faster detection.

Beyond the ability to improve power empirically, conditioning on $D_0$ allows us to theoretically analyze the power of the proposed test. We derive conditions under which our conditional CTM (i) achieves asymptotic power one, and (ii) has an upper bound on the expected time to reject the null when it is false. To the best of our knowledge, such guarantees are not available in the existing CTM literature. Finally, our experiments demonstrate that when the reference set is of reasonable size (e.g., exceeding $500$ samples), the proposed method achieves higher power and faster detection times than standard CTM and other competitive baselines.

We summarize our contributions as follows:
\begin{enumerate}
\item We introduce the \emph{conditional conformal test martingale}, a sequential test for distribution shift detection that avoids test-time contamination by design. A key feature is that shift evidence is computed using a fixed reference dataset $D_0$, rather than a growing dataset that incorporates test points as in standard CTM methods.


\item We provide finite-sample, anytime type-I error guarantees. We do so by constructing a novel betting function that accounts for estimation errors of the null distribution from the fixed reference $D_0$

\item We derive power guarantees for our conditional CTM test, including conditions for asymptotic power one and bounds on the expected detection delay.

\item We validate our theory empirically, showing improved power and faster detection compared with standard CTM and other baselines, for both change-point detection and drift detection. \footnote{A software package that implements our method is available at \url{https://github.com/shaersh/cctm}.}
\end{enumerate}

\section{Background and Related Work}
\label{sec:bg}

\subsection{Testing by Betting with Test Martingales}
\label{sec:betting}
The testing by betting paradigm \citep{shafer2021testing, ramdas2023game} conceptualizes sequential hypothesis testing through a non-negative wealth process, often termed a testing martingale. This process is initialized and dynamically updated with each observation.
More formally, the non-negative testing martingale $S_t$ is initiated at $S_0=1$ and updated by:
\begin{equation}
    \label{eq:bet_form}
    S_t = S_{t-1} \cdot b_t(X_t).
\end{equation}
Here, $b_t(\cdot) \in [0,\infty)$ is an $\mathcal{F}_{t-1}$-measurable non-negative betting function satisfying the constraint $\mathbb{E}_{H_0}[b_t(X_t) \mid \mathcal{F}_{t-1}] \leq 1$, where $\mathcal{F}_{t-1}$ is a filtration denoting all available information before observing $X_t$; for example, defining $\mathcal{F}_{t-1} := \sigma(X_1,...,X_{t-1})$ would contain the information of all the observed scores, $(X_1,...,X_{t-1})$.
This constraint ensures $S_t$ is a non-negative super-martingale under $H_0$:
\begin{align}
    \label{eq:martingale_null}
    \mathbb{E}_{H_0}[S_t \mid \mathcal{F}_{t-1}] &= \mathbb{E}_{H_0}[S_{t-1} \cdot b_t(X_t) \mid \mathcal{F}_{t-1}] \\ &= S_{t-1} \cdot \mathbb{E}_{H_0}[b_t(X_t) \mid \mathcal{F}_{t-1}] \leq S_{t-1}, \notag
\end{align}
which, in turn, provides the foundation for type-I error control. By Ville's inequality \citep{ville1939etude}, 
for any non-negative super-martingale $(S_t)_{t \geq 0}$ with $S_0=1$, we have:
$$
\mathbb{P}_{H_0}\left[\exists t \geq 1 : S_t \geq 1/\alpha\right] \leq \alpha,
$$
for any choice of a test level $\alpha \in (0,1)$. This inequality guarantees that the probability of the testing martingale ever crossing the rejection threshold $1/\alpha$ is at most $\alpha$, which establishes time-uniform type-I error control. For a brief survey of related work on testing by betting, see Appendix~\ref{appdx:betting_related}.


\subsection{Conformal Test Martingales}
\label{sec:ctm}

CTM \citep{vovk2003testing,vovk2021testing, vovk2021retrain} is a non-parametric method originally designed to test the null hypothesis of exchangeability, a weaker notion than i.i.d.
The core idea is to evaluate the extent to which the new score $X_t$ obtained at time $t$ is ``atypical'' compared to past observations.
More formally, 
at each time $t$, the method considers the set of all existing data $D_t = D_0 \cup \{X_1, \dots, X_t\}$, estimates the ECDF $\hat{F}_t$ using $D_t$, and
computes a randomized p-value $p_t$ by applying $\hat F_t$ to the newest point $X_t$:\footnote{Standard CTMs often define p-values based on the observed fraction of larger a non-conformity score $\mathbf{1}\{X_i > X_t\}$. In our case, the use of `$>$' or `$<$' does not matter, as we run a two-sided test for uniformity; both small and large $p_t$ provide evidence against the null.}
\begin{equation}
\label{eq:cdf_ctm}
    \resizebox{\columnwidth}{!}{$
    p_t := \hat{F}_t(X_t) = \frac{1}{t+n} \sum_{X_i \in D_t} \left( \mathbbm{1}\{X_i < X_t\} + U_t \cdot \mathbbm{1}\{X_i = X_t\} \right),
    $
    }
\end{equation}
    where $n=|D_0|$ is the number of samples in $D_0$, and $U_t$ is a uniform random variable drawn independently from $\mathcal{U}[0,1]$. 
    Note that \eqref{eq:cdf_ctm} can roughly be interpreted as the fraction of previous scores at least as small as the current score, with more extreme values of $p_t$ conveying greater evidence.

The key property of this construction is that, under the null hypothesis of exchangeability, the resulting sequence $p_1, p_2, \dots$ is i.i.d. uniform on $[0,1]$ \citep{vovk2003testing}. 
This allows for the construction of a test martingale, by betting against $p_t \overset{\text{i.i.d.}}{\sim} U[0,1]$ for all $t \geq 1$. There exist various test martingales for uniformity \citep{fedorova2012plug, vovk2021retrain}, where a common approach---which we use in this paper---is to test whether the sequence $p_1, p_2, \dots$ has a mean of $0.5$. In this case, the non-negative betting function is of the form:
\begin{equation}
    \label{eq:betting_unif}
    b_t(p_t) = 1 + \eta_t \cdot (p_t - 0.5),
\end{equation}
where $\eta_t \in [-2,2]$ is a predictable betting parameter. Observe that if the sign of $\eta_t$ and $(p_t-0.5)$ is the same, then $b_t(p_t) > 1$, which results in a successful bet that increases the wealth~\eqref{eq:bet_form}. Therefore, the goal of $\eta_t$ is to predict the direction of the deviation of $p_t$ from $0.5$ based on the previous p-values $p_1,...,p_{t-1}$, where its magnitude affects how aggressive the bet is. \sh{We note that the choice of the betting function in~\eqref{eq:betting_unif} is not arbitrary; recent works show that this form is optimal for betting for mean estimation of bounded random variables under proper constraints \cite{clerico2024optimal,clerico2025optimality}.}

Importantly, this design of the standard CTM, where $\hat F_t$ is continuously updated with each new sample $X_t$, exposes the procedure to ``test-time contamination,'' as discussed in Section~\ref{sec:intro}. This contamination can slow down detection and compromise the test's power; accordingly, it seems appealing to instead only allow $\hat{F}_t$ to be estimated based on a fixed holdout set $D_0$, i.e., $\hat{F}_0$. However---and crucially---naive use of only a fixed holdout set violates the independence between subsequent $p$-values 
\citep{bates2023testing} and breaks the validity of the martingale. 
Put differently, when conditioning on the finite $D_0$ without further adjustment, the estimation error in $\hat{F}_0\approx F$ (where $F$ is the true CDF) generally creates a systematic bias in the distribution of $p_t$ that manifests in dependence between 
subsequent $p$-values, which causes the ``wealth process'' to no longer be a ``fair game'' and thereby breaks the (super)martingale condition.


In this work, we close this gap by utilizing the fixed holdout set $D_0$ to model $\hat{F}_0$ while rigorously accounting for the dependence between $p_t=\hat{F}_0(X_t)$ arising from the imperfect estimation of the true CDF ${F}$.

\subsection{Additional Related Work}
\label{sec:related}

Our work is related to the work by \citet{bates2023testing},
which offers a framework to test for outliers using conformal p-values. Specifically, \citet{bates2023testing} showed that when conformal p-values are evaluated on a fixed holdout reference set they can yield invalid type-I error control when testing for a global null: whether there is one or more outliers in a test set. They then proposed a correction by using concentration inequalities, such as the Dvoretzky–Kiefer–Wolfowitz (DKW) inequality, to construct more conservative p-values that account for the uncertainty in the ECDF estimate. However, \emph{their work focuses on offline, fixed-time error control, and our goal is to achieve anytime-valid type-I error control in an online setting}. We utilize the same core principle of bounding the ECDF estimation error (e.g., via DKW confidence bands), but we develop a specialized betting function that is robust to these estimation errors. This construction allows us to transition from an offline to an online setting while guaranteeing error control simultaneously over all time steps $t$.

Our method also contributes to the general literature on online, anytime-valid methods for testing exchangeability and the i.i.d. assumption. Beyond the standard CTM framework \citep{vovk2021testing, vovk2003testing}, related methodologies include pairwise betting \citep{saha2024testing} and an approach based on universal inference \citep{ramdas2022testing, wasserman2020universal}, both of which develop anytime-valid tests with theoretical consistency guarantees. The universal inference approach \citep{ramdas2022testing} is primarily designed for binary sequences, making it less applicable to the general univariate score $X \in \mathbb{R}$ setting considered here. The pairwise betting approach \citep{saha2024testing} tests for exchangeability by betting on the ordering of sample pairs. Crucially, this approach uses all observed data to fit the distributional model underpinning their bettor. Consequently, when a shift occurs after a long sequence of in-distribution observations, the learned model parameters are dominated by the in-distribution data, resulting in a delayed detection of the distribution shift.
Most recently, \citet{fischer2025sequential} also develop a sequential Monte Carlo test for exchangeability with anytime-valid guarantees, but this test is for a particular alternative hypothesis; in contrast, our methods allow for simultaneously testing against \textit{any} alternative hypothesis, to detect arbitrary distribution shifts.


\section{Proposed Method}
\label{sec:proposed}

In this section, we propose the conditional CTM method: an online test for $H_0$~\eqref{eq:null} that is valid conditionally on the finite sample reference set $D_0$. 
All proofs are in Appendix~\ref{appdx:proofs}.

\subsection{Forming the Test}
\label{sec:martingale}

To set the stage, we first consider an ideal setting where the true CDF of $P$, denoted as $F$, is known. Under the null hypothesis $H_0$, the true CDF values $F(X_t)$ are \emph{i.i.d. uniform} on $[0,1]$ by the probability integral transform. 
This implies that $\mathbb{E}_{H_0}[F(X_t)] = 0.5$; a property that the standard CTM utilizes, see~\eqref{eq:betting_unif}. A distribution shift (under an alternative $H_1$) can lead to a deviation from uniformity, resulting in $\mathbb{E}_{H_1}[F(X_t)] \neq 0.5$. Consequently, in this ideal case, we can test for $H_0$ using the same betting function from~\eqref{eq:betting_unif}.

In practice, the true CDF $F$ is unknown, and we must rely on its empirical estimate evaluated on the finite reference set $D_0$. We define the ECDF as $\hat{F}_0(x):=\frac{1}{n}\sum_{i=1}^n \mathbbm{1}\{X_i^0 \leq x\}$. Since the transformed values $\hat{F}_0(X_t)$ exhibit deviations from the ideal ${F}(X_t)$ due to estimation errors, may not be i.i.d. uniform on $[0,1]$ under $H_0$. Naively substituting $\hat{F}_0$ for $F$ in the betting function can therefore violate the martingale property under the null~\eqref{eq:martingale_null}, leading to an uncontrolled type-I error. Indeed, this is precisely how the invalid CTM method from Figure~\ref{fig:motivation_combined} is implemented; observe that this method fails to control the type-I error.\footnote{The invalid CTM is implemented by computing the \emph{randomized} conformal p-values (using only $D_0$) to ensure the variables are uniform. This emphasizes that the test is invalid as it violates the i.i.d. assumption under the null~\citep{bates2023testing}.}

To rigorously account for the estimation error of $\hat{F}_0$, we construct uniform confidence bounds for $F$ at a given level $\delta$ using $D_0$, such that
\begin{equation}
\label{eq:CI}
\mathbb{P}_{D_0} \bigl(|\hat{F}_0(x) - F(x)| \leq \epsilon(x), \text{ for all } x \in \mathbb{R} \bigr) \geq 1-\delta.
\end{equation}
Above, $\epsilon(x) \in [0,1]$ represents the half-width of the confidence band around $\hat{F}_0(x)$, which may vary for different values of $x$.\footnote{We assume the bounds are symmetric around $\hat{F}_0(x)$ for simplicity. The extension to asymmetric bounds is straightforward.}
A standard way to construct bounds satisfying~\eqref{eq:CI} is by using the DKW inequality \citep{dvoretzky1956asymptotic}. This provides a constant-width bound $\epsilon(x) = \epsilon_n$, where $\epsilon_n = \sqrt{\frac{1}{2n}\log(\frac{2}{\delta})}$ depends only on the reference set size $n$ and the confidence level $\delta$. There exist other methods that provide uniform bounds with varying width, i.e., of non-constant $\epsilon(x)$ \citep{owen1995nonparametric}. Consequently, our remaining analysis assumes the general form in \eqref{eq:CI}, which allows for bounds that are not necessarily constant.

We now introduce a novel betting function designed to be valid conditional on $D_0$ by explicitly incorporating the estimation error of $\hat{F}_0$.
Let $\hat{p}_t=\hat{F}_0(X_t)$ and define $\mathcal{F}_{t} = \sigma(\hat{p}_1,...,\hat{p}_{t})$ as the filtration generated by $\hat{p}_1,...,\hat{p}_{t}$.
We define the wealth process $S_t = S_{t-1}\cdot b_t(\hat{p}_t)$ (with $S_0=1$) that is adapted to $\mathcal{F}_{t}$, where our proposed betting function is given by
\begin{equation}
    \label{eq:non_diff_b}
    b_t(\hat{p}_t) = 1 + \eta_t \cdot (\hat{p}_t - 0.5) - |\eta_t| \cdot \epsilon(\hat{p}_t).
\end{equation}
Here, the term $1 + \eta_t (\hat{p}_t - 0.5)$ mirrors the standard CTM betting formula (Section~\ref{sec:ctm}), while the correction term in the right hand side $- |\eta_t| \cdot \epsilon(\hat{p}_t)$ removes gains that might arise solely from estimation errors. Indeed, in the ideal case where $\epsilon(\hat{p}_t)=0$ (i.e., $\hat F_0=F$), the bet $b_t(\hat{p}_t) > 1$ as long as $\eta_t$ and $(\hat{p}_t - 0.5)$ have the same sign, resulting in wealth growth since $S_{t}=S_{t-1}\cdot b_t(\hat{p}_t)$. When accounting for estimation error ($\epsilon(\hat{p}_t)>0$), however, systematic growth in wealth occurs only if the deviation $|\hat{p}_t - 0.5|$ exceeds $\epsilon(\hat{p}_t)$; and this event is unlikely to happen under the null. In other words, evidence for a shift is only accumulated when the observed deviation is sufficiently large to overcome the uncertainty introduced by finite-sample estimation of $F$. 

The betting parameter $\eta_t$ in~\eqref{eq:non_diff_b} is predictable, i.e., it is measurable with respect to the history $\mathcal{F}_{t-1} = \sigma(\hat{p}_1, \dots, \hat{p}_{t-1})$ {and must be set without looking at $\hat p_t$}. This allows us to learn how to predict $\eta_t$ given past data and $D_0$, with the goal of maximizing the accumulated wealth.
To effectively learn the betting variable $\eta_t$ in a our online setting, we employ the online newton step (ONS) algorithm \citep{hazan2007logarithmic, cutkosky2018black}, as we detail later in Section~\ref{sec:online_learning_ons}. ONS is a second-order gradient-based method which requires the betting function to be twice differentiable with respect to $\eta_t$. Since our proposed betting~\eqref{eq:non_diff_b} includes the term $|\eta_t|$, which is non-smooth at zero, we introduce a smoothed approximation $|\eta_t| \approx \sqrt{\eta_t^2 + k^2}$, where $0<k\ll1$ is a small constant.\footnote{We note that $k>0$ can be as small as desired; we use this property in Section~\ref{sec:power_analysis}, where we analyze the power of the proposed test. We set $k=10^{-6}$ in our experiments.} This yields the final smoothed betting function:
\begin{align}
    \label{eq:betting}
    b_t(\hat{p}_t) &= 1+g_t(\hat{p}_t,\eta_t) \\
    &:= 1 + \mathcal{C} \cdot (\eta_t \cdot (\hat{p}_t - 0.5) -\sqrt{\eta_t^2+k^2}\cdot \epsilon(\hat{p}_t)), \notag
\end{align}
where $\eta_t \in [-1,1]$ and $\mathcal{C} := \C >0$ is a scaling constant that ensures $b_t(\hat{p}_t)$ lies in $[0,2]$.\footnote{In practice, for ONS we restrict $\eta_t \in [-0.5,0.5]$ to ensure that $g_t(\hat{p}_t,\eta_t) \in [-0.5,0.5]$ for all $\hat{p}_t,\eta_t$ and $\epsilon(\hat{p}_t)$.} \

Lastly, 
to perform the test at a given level $\alpha \in (0,1)$, we monitor $S_t$ for $t=1,2,\ldots$, and we reject the null hypothesis the first time it exceeds $1/\alpha$. The full test procedure is outline in Algorithm~\ref{alg:CCTM}.

In the following theorem, we show that the type-I error of the proposed test is bounded by $\alpha$, with probability $1-\delta$ over the randomness of $D_0$. \sh{That is, as long as the true CDF $F$ is within the $D_0$-dependent confidence bounds~\eqref{eq:CI}, which occurs with a probability of at least $1-\delta$, the probability of ever falsely rejecting $H_0$ is bounded by $\alpha$.}
Importantly, the guarantee holds in finite samples and for any null  $P$.
 \begin{theorem}
     \label{thm:err_control}
     Let $\mathcal{F}_{t-1} := \sigma(\hat{p}_1,...,\hat{p}_{t-1})$ be the filtration generated by $\hat{p}_1,...,\hat{p}_{t-1}$.
     Given an ECDF $\hat{F}_0$ of the null distribution $P$, estimated with the reference set $D_0$, and corresponding confidence bounds $\epsilon(\cdot)$ that satisfy~\eqref{eq:CI} for a chosen $\delta \in (0,1)$, then for any $\alpha \in (0,1)$:
     $$
     \mathbb{P}_{D_0}\bigl( \mathbb{P}_{H_0}\left( \exists t \geq 1 : S_t \geq 1/\alpha \mid \mathcal{F}_{t-1}, D_0 \right) \leq \alpha \bigr) \geq 1-\delta.
     $$
 \end{theorem}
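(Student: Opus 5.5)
The plan is to work on the ``good'' event
$$E := \bigl\{ |\hat{F}_0(x) - F(x)| \leq \epsilon(x) \ \text{for all } x \in \mathbb{R} \bigr\},$$
which by \eqref{eq:CI} has $\mathbb{P}_{D_0}(E) \geq 1-\delta$. We then show that, for every fixed $D_0 \in E$, the wealth process $(S_t)_{t\geq 0}$ is a nonnegative supermartingale under $H_0$ with respect to $\mathcal{F}_t$, conditionally on $D_0$. Given this, Ville's inequality (Section~\ref{sec:betting}), applied conditionally on $D_0$, gives $\mathbb{P}_{H_0}(\exists t\geq 1: S_t \geq 1/\alpha \mid D_0) \leq \alpha$ for every such $D_0$. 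Since $E$ has probability at least $1-\delta$, the outer statement follows. I read the conditioning in the theorem as conditioning on $D_0$, with the filtration only specifying predictability of $\eta_t$.

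The first step is nonnegativity: $b_t(\hat{p}_t) \geq 0$ for all $\hat p_t$ and all $\eta_t\in[-1,1]$. This follows from the choice of $\mathcal{C}$. Indeed, $|\eta_t(\hat p_t - 0.5)| \leq 0.5$ and $\sqrt{\eta_t^2+k^2}\,\epsilon(\hat p_t) \leq \sqrt{1+k^2}\max_{\hat p}\epsilon(\hat p)$. Hence $|g_t| \leq 1$, so $b_t\in[0,2]$.

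The second, main step is the supermartingale inequality $\mathbb{E}_{H_0}[b_t(\hat p_t)\mid \mathcal{F}_{t-1}, D_0] \leq 1$, which amounts to $\mathbb{E}_{H_0}[g_t(\hat p_t,\eta_t) \mid \mathcal{F}_{t-1}, D_0] \leq 0$. Under $H_0$, $X_t$ is independent of $D_0$ and of $X_1,\dots,X_{t-1}$, and $\eta_t$ is $\mathcal{F}_{t-1}$-measurable. We can therefore treat $\eta_t$ and $D_0$ as fixed and average over $X_t \sim P$. On $E$ we have pointwise
$$\eta_t(\hat p_t - 0.5) \leq \eta_t(F(X_t)-0.5) + |\eta_t|\,\epsilon(\hat p_t) \leq \eta_t(F(X_t)-0.5) + \sqrt{\eta_t^2+k^2}\,\epsilon(\hat p_t).$$
Here we use the band evaluated at the point $X_t$, identified with the $\epsilon(\hat p_t)$ notation of \eqref{eq:betting}, together with $\sqrt{\eta^2+k^2}\geq|\eta|$. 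Multiplying by $\mathcal{C}>0$ gives $g_t \leq \mathcal{C}\,\eta_t(F(X_t)-0.5)$. Taking expectations, we get $\mathbb{E}[g_t\mid\mathcal{F}_{t-1},D_0] \leq \mathcal{C}\,\eta_t(\mathbb{E}[F(X_t)]-0.5)$, and this is $0$ because $F(X_t)\sim\mathcal{U}[0,1]$ by the probability integral transform.

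I expect the main obstacle to be this last point. The PIT gives exact uniformity only when $F$ is continuous. For discrete or mixed $P$ we have $\mathbb{E}[F(X)] \geq 0.5$, so the sign of $\eta_t$ matters and the argument above breaks. My first attempt would be to replace $F(X_t)$ by its randomized version $F(X_t^-) + U_t(F(X_t)-F(X_t^-))$, which is exactly uniform. I would then check that the band in \eqref{eq:CI} still controls the gap between $\hat p_t$ and this randomized value, possibly after randomizing $\hat p_t$ in the same way and using the band at both $X_t$ and $X_t^-$. Failing that, I would state the theorem under the continuity assumption on $P$ implicit in the PIT discussion of Section~\ref{sec:martingale}.

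Once the supermartingale property holds on $E$, the proof concludes with the product structure $S_t = S_{t-1}b_t$, as in \eqref{eq:martingale_null}, followed by Ville's inequality.
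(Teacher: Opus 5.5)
Your proposal is correct and follows the paper's proof. You bound $\sqrt{\eta_t^2+k^2}\ge|\eta_t|$, absorb $\eta_t(\hat{F}_0(X_t)-F(X_t))$ into $|\eta_t|\,\epsilon$ on the band event, remove $\eta_t(F(X_t)-0.5)$ by the probability integral transform, and then apply Ville's inequality given $D_0$, paying $\delta$ for the band failing. Your single pointwise inequality replaces the paper's four sign cases, and your $D_0$-measurable event $E$ states the conditioning more cleanly than the paper's per-$t$ condition.

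The continuity caveat you raise is genuine, and the paper's PIT step silently assumes it too. For a point-mass null, $\hat{p}_t=F(X_t)=1$, so $\eta_t\equiv 1/2$ gains every round once the band is narrow. Stating the result for continuous $P$ is therefore the right fix.
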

Notably, although the result above is formulated using the univariate scores $X_t$, its validity extends directly to the raw data stream $Z_1,Z_2,...$. Specifically, as long as the score function $\hat{s}$ is fixed (e.g., the confidence of a fixed classifier applied to $Z_t$), Theorem~\ref{thm:err_control} ensures that our test rigorously controls the probability of a false alarm for a distribution shift in the raw data stream. 

\begin{algorithm}[t]
\caption{Conditional Conformal Test Martingale}\label{alg:CCTM}
\begin{algorithmic}
    \STATE \textbf{Input:} 
    \STATE \ \  $D_0=\{X^0_i\}_{i=1}^n$: fixed reference set
    \STATE \ \  $X_1,...,X_t,...$: stream of test samples
    \STATE \ \  $\alpha$: test level
    \STATE \ \ $\delta$: level of the CDF confidence bounds
    \STATE \ \ $k>0$: smoothing parameter
    \vspace{0.5em} 
    \STATE \textbf{Algorithm Steps:}
    \STATE Estimate $\hat{F}_0(x):=\frac{1}{n}\sum_{i=1}^n \mathbbm{1}\{X_i^0 \leq x\}$
    \STATE Initialize $S_0 \gets 1$, $\eta_1 \gets 0$
    \STATE \textbf{for} {$t=1,...$} \textbf{do}
    \STATE \quad $\hat{p}_t \gets \hat{F}_0(X_t)$
    \STATE \quad Evaluate $\epsilon(\hat{p}_t)$ with $D_0$ and $\delta$ (e.g., with DKW)
    \STATE \quad $S_{t} \gets S_{t-1} \cdot b_t(\hat{p}_t)$ where $b_t(\hat{p}_t)$ defined in \eqref{eq:betting}
    \STATE \quad \textbf{if} {$S_t \geq 1/\alpha$} \textbf{then}
    \STATE \quad \quad reject $H_0$
    \STATE \quad Update $\eta_{t+1}$ by applying Algorithm~\ref{alg:eta_ons_smooth} 
\end{algorithmic}
\end{algorithm}

\subsection{Online Learning of the Betting Parameter $\eta_t$}
\label{sec:online_learning_ons}

\begin{algorithm}[t]
\caption{Online Newton Step (ONS) to learn $\eta_t$ online}\label{alg:eta_ons_smooth}
\begin{algorithmic}
    \STATE \textbf{Input:} 
    \STATE \ \ $\hat{p}_t = \hat{F}_0(X_t) \in [0,1]$: conformal p-value 
    \STATE \ \  $\eta_t$: last betting parameter's value where $\eta_1=0$
    \STATE \ \ $a_{t-1}$: previous ONS parameter value where $a_0=1$
    \STATE \ \ $\epsilon(\hat{p}_t)$: confidence bound parameter as in \eqref{eq:CI}
    \vspace{0.5em} 
    \STATE \textbf{Algorithm Steps:}
    \STATE $g_t \gets \mathcal{C} \cdot (\eta_t \cdot (\hat{p}_t - 0.5) -\sqrt{\eta_t^2+k^2}\cdot \epsilon(\hat{p}_t))$
    \STATE $g'_t \gets \mathcal{C} \cdot (\hat{p}_t - 0.5 -\frac{\eta_t}{\sqrt{\eta_t^2+k^2}}\cdot \epsilon(\hat{p}_t))$
    \STATE $z_t \gets {g'_t}/{(1+g_t)}$
    \STATE $a_t \gets a_{t-1} + z_t^2$
    \STATE $\eta_{t+1} \gets \max(-\frac{1}{2},\min(\eta_t + 4 \cdot \frac{z_t}{a_t},\frac{1}{2}))$
    \vspace{0.5em}
    \STATE \textbf{Output:} $\eta_{t+1}$
\end{algorithmic}
\end{algorithm}

We now turn to present an online learning approach to predict the betting variable $\eta_t$ from past observations. Our goal is to maximize the wealth by minimizing the negative log of the martingale up to timestep $T$:
\begin{align}
    \label{eq:log_loss}
    &-\ln(S_T) = -\sum_{t=1}^T \ln(1+g_t(\hat{p}_t,\eta_t)),
\end{align}
where $g_t(\hat{p}_t,\eta_t)$ is as defined in~\eqref{eq:betting}.
Therefore, we denote the loss at time $t$ as the negative log of the betting function parameterized by $\eta$:
\begin{equation}
    \label{eq:loss_ons}
    \ell_t(\eta) = -\ln\left(1+\mathcal{C} \cdot (\eta(\hat{p}_t - 0.5) - \sqrt{\eta^2 + k^2}\cdot \epsilon(\hat{p}_t))\right).
\end{equation}
Notably, the above loss is twice differentiable with respect to $\eta$, allowing us to adapt ONS to this setting and learn how to predict $\eta_t$ from past samples.
The complete algorithm is presented in Algorithm~\ref{alg:eta_ons_smooth}.

While ONS is widely used in the testing by betting literature \citep{cutkosky2018black, shekhar2023nonparametric, chugg2023auditing, dai2025individual}, we note that its application here differs from standard settings. In typical martingale-based sequential tests, the betting parameter $\eta$ usually multiplies a single outcome, i.e., $\ln(1+\eta_t h_t(\hat{p}_t))$ where $h_t(\hat{p}_t)$ does not depend on $\eta_t$. In contrast, in our formulation~\eqref{eq:loss_ons}, $g_t(\hat{p}_t,\eta_t)$ cannot be separated into a simple product of $\eta_t$ and a term independent of it. This, in turn, requires a different analysis of the learning process. The complete analysis is detailed in Appendix~\ref{appdx:online},
where we establish that the cumulative negative log wealth up to time $T$~\eqref{eq:log_loss} is bounded by that of the best fixed strategy in hindsight, plus a term logarithmic in time.
This bound is crucial for the power analysis presented next, as it allows us to set the conditions for asymptotic power one and to derive the expected time to reject the null. 


\subsection{Power Analysis}
\label{sec:power_analysis}

To analyze the power of the proposed test, we consider a fixed alternative distribution:
\begin{equation}
    \label{eq:H1}
    H_1: X_t \overset{\text{i.i.d.}}{\sim} Q, \ \ \forall \ t\geq 1,
\end{equation}
where $Q \neq P$. Note that our testing framework is valid against composite alternatives, but we focus here on a fixed alternative to provide a formal power analysis.

We characterize the difficulty of the detection task by defining a quantity that measures the extent to which the alternative distribution $Q$ deviates from $P$. Formally, we refer to it as the effective signal strength $\Delta_k$ with respect to the estimation error, given by:
\begin{equation}
    \label{eq:delta}
    \Delta_k:= |\mathbb{E}_{H_1}[\hat{p}_t - 0.5 \mid D_0]| - \sqrt{1+k^2} \Bar{\epsilon},
\end{equation}
where $\Bar{\epsilon} := \mathbb{E}_{H_1}[\epsilon(\hat{p})]$, with the expectation taken under the alternative $H_1$. Intuitively, $\Delta_k$ can be viewed as the margin by which the signal used for betting $|\mathbb{E}_{H_1}[\hat{p}_t - 0.5 \mid D_0]|$ exceeds the CDF estimation error term $\sqrt{1+k^2}\Bar{\epsilon}$. Notably, as $|D_0| \rightarrow \infty $ we have that $\Bar{\epsilon} \rightarrow 0$. In turn, $\Delta_k$ increases with the size of the reference set $D_0$.

Having defined the alternative $H_1$ and the effective signal strength $\Delta_k$, we now state the first result.
\begin{lemma}
    \label{lemma:expected_bet}
     Under a fixed alternative $H_1$, i.e., $X_t \sim Q$ where $Q \neq P$ for all $t \geq 1$, and given the reference set $D_0$. For a small enough $k$, if $\Delta_k>0$ then $\max_{\eta \in [-\frac{1}{2},\frac{1}{2}]} \mathbb{E}_{H_1}{\ln(1 + g(\hat{p},\eta))} > 0$.
\end{lemma}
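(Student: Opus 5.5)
Define $\phi(\eta) := \mathbb{E}_{H_1}[\ln(1+g(\hat{p},\eta)) \mid D_0]$ for $\eta \in [-\tfrac12,\tfrac12]$. The plan is to find a single $\eta^\star$ with $\phi(\eta^\star)>0$, which immediately gives the claim about the maximum.

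Write $\mu := \mathbb{E}_{H_1}[\hat{p}-0.5 \mid D_0]$ and $\sigma := \operatorname{sign}(\mu)$. The hypothesis $\Delta_k>0$ forces $\mu \neq 0$, so $\sigma \in \{\pm 1\}$. We restrict attention to the ray $\eta = \sigma\lambda$ with $\lambda \in (0,\tfrac12]$, and take expectations of $g$ itself first:
\[
\mathbb{E}_{H_1}[g(\hat{p},\sigma\lambda)] = \mathcal{C}\bigl(\lambda|\mu| - \sqrt{\lambda^2+k^2}\,\bar{\epsilon}\bigr).
\]
Since $\sqrt{\lambda^2+k^2}\le \lambda+k$, the right-hand side is at least $\mathcal{C}\bigl(\lambda(|\mu|-\bar\epsilon) - k\bar\epsilon\bigr)$. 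This is the point where ``$k$ small enough'' enters. The condition $\Delta_k>0$ gives $|\mu| > \sqrt{1+k^2}\,\bar\epsilon \ge \bar\epsilon$. Alternatively, for $\lambda\le 1$ one can use $\sqrt{\lambda^2+k^2}\le \sqrt{1+k^2}\cdot\max(\lambda,k)$, or simply exploit the margin $\Delta_k$ directly. In any case, for each fixed $\lambda>0$ the expected first-order gain is $\mathcal{C}\lambda\Delta' - O(k)$ with $\Delta'>0$, and this is positive once $k$ is small relative to $\lambda$.

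Next we pass from $\mathbb{E}[g]$ to $\mathbb{E}[\ln(1+g)]$ via the elementary inequality $\ln(1+x)\ge x - x^2$, valid for $|x|\le \tfrac12$. The scaling $\mathcal{C}$ together with $|\eta|\le\tfrac12$ guarantees $|g|\le \tfrac12$, which is the paper's footnote. We also need $|g(\hat p,\sigma\lambda)| \le \mathcal{C}\bigl(\lambda/2 + \sqrt{\lambda^2+k^2}\bigr) \le c(\lambda+k)$ for a constant $c$ depending only on $\mathcal{C}$. Combining these,
\[
\phi(\sigma\lambda) \ge \mathcal{C}\bigl(\lambda|\mu| - (\lambda+k)\bar\epsilon\bigr) - c^2(\lambda+k)^2.
\]
Choosing $\lambda = \min\{\tfrac12, \theta(|\mu|-\bar\epsilon)\}$ for a small constant $\theta$ makes the quadratic term at most half the linear term. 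Taking $k$ small, for example $k \ll \lambda(|\mu|-\bar\epsilon)$, then makes the $k$-terms negligible, so $\phi(\sigma\lambda)>0$.

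The main obstacle is the order of quantifiers with $k$. The effective signal $\Delta_k$ itself depends on $k$, and $\mathcal{C}$ depends on $k$ as well, but it stays bounded away from $0$ uniformly for $k\le 1$. The argument must therefore show that the strict gap $|\mu|-\bar\epsilon>0$, which is implied by $\Delta_k>0$ and is independent of $k$, dominates the $O(k)$ smoothing penalty. Once $\lambda$ is fixed in terms of $|\mu|-\bar\epsilon$ alone, the required smallness of $k$ is explicit. A secondary check is confirming $|g|\le\tfrac12$ so that the logarithm bound applies. If the bound $\ln(1+x)\ge x-x^2$ proves too loose, a fallback is the concavity and smoothness of $\phi$: $\phi(0)=\mathbb{E}\ln(1-\mathcal{C}k\epsilon)=O(k)$, and the directional derivative at $0^+$ along $\sigma$ is at least $\mathcal{C}(|\mu|-\bar\epsilon)-O(k)>0$.
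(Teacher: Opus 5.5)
Your proof is correct, but it runs in a different regime from the paper's. The paper ties the bet to the smoothing parameter. It sets $\eta=\lambda k$ with $\lambda>\bar\epsilon/\sqrt{\mu^2-\bar\epsilon^2}$, so that $\mathbb{E}[g]=\mathcal{C}k\rho$ with $\rho=\mu\lambda-\bar\epsilon\sqrt{1+\lambda^2}>0$. It then passes to the logarithm via $\mathbb{E}[\ln Z]\ge\ln\mathbb{E}[Z]-\mathrm{Var}(Z)/(2a^2)$ (using $1+g\ge a(k)\ge\tfrac12$) and $\ln(1+z)\ge z/2$. Since the $\Theta(k)$ gain beats the $\Theta(k^2)$ variance penalty, it gets $\mathbb{E}[\ln(1+g)]\ge\mathcal{C}\rho k/4$.

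You instead fix $|\eta|=\lambda\asymp|\mu|-\bar\epsilon=\Delta_0$ independently of $k$. You absorb the smoothing as an additive $O(k)$ error via $\sqrt{\lambda^2+k^2}\le\lambda+k$, and use the pointwise inequality $\ln(1+x)\ge x-x^2$ with an almost-sure bound on $|g|$.

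Your route buys two things.
First, your lower bound is of order $\mathcal{C}\Delta_0^2$ uniformly as $k\to0$. The paper's $\mathcal{C}\rho k/4$ vanishes in that limit, so it certifies positivity but gives no usable growth rate. Your bound is exactly the $\omega_*\gtrsim\Delta_0^2$ scaling that the stopping-time analysis later relies on.
Second, you never compute $\mathrm{Var}(g)$. The paper's identity $\mathrm{Var}(g)=\mathcal{C}^2\lambda^2k^2\sigma^2$ tacitly assumes a constant band width such as DKW; otherwise the term $\sqrt{1+\lambda^2}\,\epsilon(\hat p)$ also fluctuates (harmlessly, since it is still $O(k^2)$).
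In exchange, the paper's route makes the dependence on $\mathrm{Var}(\hat p)$ explicit in its thresholds $k_1,k_2,k_3$.

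Two small points:
\begin{enumerate}
\item The footnote you invoke does not quite give $|g|\le\tfrac12$ on all of $|\eta|\le\tfrac12$ when $k>0$. The natural bound $\mathcal{C}\bigl(\tfrac14+\sqrt{\tfrac14+k^2}\,\max_{\hat p}\epsilon(\hat p)\bigr)$ can exceed $\tfrac12$ by $O(k^2)$. Your own bound $|g|\le c(\lambda+k)$ with $\lambda,k$ small supplies this directly, so nothing breaks.
\item Your concavity fallback would not close as stated. Concavity only yields upper bounds from the tangent at $0$, and the curvature of $\phi$ at $0$ is of order $\bar\epsilon/k$. Your main argument does not need it.
\end{enumerate}
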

The exact upper bound for $k$ is provided in the proof in Appendix~\ref{appdx:proof_lemma_expected_bet}. In simple terms, Lemma~\ref{lemma:expected_bet} asserts that if the effective signal strength $\Delta_k$ is positive, there exists a fixed betting variable $\eta$ that yields a positive expected gain. This positive gain is essential to drive the test martingale to grow, ensuring that the null hypothesis will eventually be rejected, as stated below.

\begin{theorem}
     \label{prop:power_one}
     Under the same assumptions as in Lemma~\ref{lemma:expected_bet},
     the test presented in Algorithm~\ref{alg:CCTM} has asymptotic power one. Further, for $k \rightarrow 0$, the expected stopping time $\tau := \inf\{t \geq 1:S_t \geq 1/\alpha\}$ for level $\alpha$ test is bounded by 
     $$
     \mathbb{E}_{H_1}{\tau} = \mathcal{O}\left( \frac{1}{\Delta_0^2} \ln \left(\frac{1}{\alpha \Delta_0}\right) + \frac{1}{\Delta_0^2} \right).$$
\end{theorem}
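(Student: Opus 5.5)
Conditional on $D_0$, the scores $\hat p_t=\hat F_0(X_t)$ are i.i.d. under $H_1$ (since $X_t\sim Q$ i.i.d. and $\hat F_0$ is fixed), so $\ln b_t$ evaluated at any fixed $\eta$ is an i.i.d. bounded sequence. The plan is to combine the ONS regret guarantee of Appendix~\ref{appdx:online} with Lemma~\ref{lemma:expected_bet}. By the regret bound, for every realization,
\begin{equation*}
\ln S_T \;\ge\; \max_{\eta\in[-\frac12,\frac12]}\sum_{t=1}^T \ln(1+g_t(\hat p_t,\eta)) \;-\; c\ln T - c',
\end{equation*}
with constants $c,c'$ depending only on $\mathcal C$ (the losses are exp-concave with bounded gradients, since $g_t\in[-\frac12,\frac12]$). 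Fix the maximizer $\eta^\star$ of $\mathbb E_{H_1}\ln(1+g(\hat p,\eta))$, and set $\mu:=\mathbb E_{H_1}\ln(1+g(\hat p,\eta^\star))$. Lemma~\ref{lemma:expected_bet} gives $\mu>0$ when $\Delta_k>0$. By the strong law of large numbers, $\frac1T\sum_t\ln(1+g_t(\hat p_t,\eta^\star))\to\mu$ almost surely. Hence $\frac1T\ln S_T\ge \mu-o(1)$, so $S_T\to\infty$ almost surely and $\tau<\infty$ almost surely, which is asymptotic power one.

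For the stopping-time bound, first quantify $\mu$ in terms of $\Delta_0$ as $k\to0$. Take $\eta=\pm\lambda$ with the sign of $\mathbb E[\hat p-0.5]$. Writing $x=g(\hat p,\eta)\in[-\frac12,\frac12]$ and using $\ln(1+x)\ge x-x^2$,
\begin{equation*}
\mathbb E\ln(1+g)\ \ge\ \mathcal C\lambda\,\Delta_0-\mathcal C^2\lambda^2 c_0 .
\end{equation*}
Here the $k$-correction vanishes as $k\to0$ because $\sqrt{\lambda^2+k^2}\to\lambda$, and $\mathbb E x^2\le c_0\lambda^2$. Choosing $\lambda\asymp\Delta_0$ (which lies in $[0,\frac12]$ since $\Delta_0\le\frac12$) yields $\mu\ge c_1\Delta_0^2$.

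Next bound $\mathbb E\tau=\sum_{T\ge0}\mathbb P(\tau>T)\le T_0+\sum_{T>T_0}\mathbb P(\ln S_T<\ln(1/\alpha))$. By the regret bound, the event $\{\ln S_T<\ln(1/\alpha)\}$ implies
\begin{equation*}
\sum_{t\le T}\bigl(Y_t-\mu\bigr)<\ln(1/\alpha)+c\ln T+c'-T\mu,
\end{equation*}
where $Y_t=\ln(1+g_t(\hat p_t,\eta^\star))\in[\ln\frac12,\ln\frac32]$ are bounded i.i.d. variables. Choose $T_0$ so that for $T\ge T_0$ the right side is at most $-T\mu/2$. The condition $T\mu/2\ge\ln(1/\alpha)+c\ln T+c'$ with $\mu\asymp\Delta_0^2$ is satisfied once
\begin{equation*}
T_0=\mathcal O\!\left(\Delta_0^{-2}\ln\frac{1}{\alpha\Delta_0}+\Delta_0^{-2}\right),
\end{equation*}
since solving $T\gtrsim \Delta_0^{-2}\ln T$ gives $T\sim\Delta_0^{-2}\ln(1/\Delta_0)$. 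Hoeffding's inequality then gives $\mathbb P(\tau>T)\le e^{-cT\mu^2}$ for $T\ge T_0$. For this tail to sum to something of order $\Delta_0^{-2}$, the concentration must be variance-adapted. Using $\mathrm{Var}(Y_t)=\mathcal O(\lambda^2)=\mathcal O(\Delta_0^2)$, a Bernstein bound gives exponent $\gtrsim T\mu^2/(\Delta_0^2+\mu)\asymp T\Delta_0^2$, so the tail sums to $\mathcal O(\Delta_0^{-2})$.

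The main obstacle is that the regret bound compares against the best fixed $\eta$ with data-dependent constants, and that the per-step comparator variance must be small. I would therefore compare against the fixed $\eta=\pm\lambda$ rather than $\eta^\star$, so that the bounded range of $Y_t$ scales as $\lambda$ and the Bernstein step goes through cleanly. The ONS analysis of Appendix~\ref{appdx:online} must also yield regret $\mathcal O(\ln T)$ with constants uniform in $k$; I would take that bound as given.
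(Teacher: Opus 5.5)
Your proposal is correct and follows essentially the same route as the paper. The shared steps are: the pathwise regret decomposition $\ln S_T=\ln S_T(\tilde\eta)-R_T(\tilde\eta)$; the bound $\ln(1+x)\ge x-x^2$, which gives per-round log-growth of order $\Delta_0^2$; the switch from the expected-log maximizer to a comparator of size of order $\Delta_0$ (the paper's $\eta^\sharp$), so that the comparator's log-increment variance is controlled by its growth rate; and a Bernstein tail summed geometrically past $t_*$.

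The only real difference is in the power-one part. You obtain it from the strong law of large numbers ($\liminf_T \frac1T\ln S_T\ge\mu>0$ almost surely), whereas the paper combines its Bernstein tail bound with $\mathbb P(\tau=\infty)\le\lim_T\mathbb P(S_T<1/\alpha)$; your route is equally valid and a bit more direct.
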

The above result implies that our proposed test is guaranteed to reject $H_0$ with probability one if $Q$ sufficiently deviates from $P$. Furthermore, the expected stopping time scales with the order of $\Delta_0^{-2}$, indicating that detection time reduces as the magnitude of the distribution shift increases. Crucially, to the best of our knowledge, this is the first result establishing asymptotic power one and providing bounds on the expected stopping time for CTM-based procedures.

\sh{
\begin{remark_non} 
\textit{We note that the $\mathcal{O}(\Delta_0^{-2} \ln(1/\alpha\Delta_0))$ stopping time bound we achieve aligns with results established in recent betting literature (e.g., \citet{chugg2023auditing,gauthier2026betting}). However, establishing this bound in our setting introduces a unique technical challenge. Standard sequential tests typically use betting functions that can be separated into the betting parameter $\eta_t$ and the observed $\hat p_t$ (i.e., $b_t(\hat p _t)=1+\eta_t \cdot  g_t(\hat p_t)$), whereas our betting function~\eqref{eq:betting} cannot. Consequently, achieving this stopping time bound requires substantial new theoretical derivations, such as deriving tailored regret bounds for the ONS algorithm (Appendix~\ref{appdx:online}), and subsequently proving the asymptotic power and bounded expected stopping time (Theorem~\ref{prop:power_one}).
}
\end{remark_non} 
}

\subsection{Practical Consideration: Mitigating Wealth Decay}
\label{sec:practical}


Under $H_0$, the best choice for the betting parameter is $\eta_t = 0$, since systematic wealth growth is impossible; any bet with $\eta_t \neq 0$ risks reducing wealth. In practice, however, due to stochasticity, the online learning scheme produces $\eta_t$ values that fluctuate around zero. This can lead to a decay in the wealth process, which is problematic: if a shift occurs after a long period, the martingale must first recover the lost wealth before it can cross the rejection threshold, thereby increasing the detection delay. This limitation is observed both for our method and for the standard CTM, and is well known in the literature \citep{volkhonskiy2017inductive}. In our case, this decay can be further amplified, since there is a wider range of $\eta_t$ values for which the bet is smaller than 1.

To mitigate this, we clip $\eta_t$ to zero whenever it is small. Concretely, at each step $t$, after ONS proposes $\eta_t$, we set $\eta_t = 0$ if $|\eta_t| < C$, where $C \ge 0$ is a clipping parameter ($C=0$ means no clipping; in our experiments $C=0.1$). This ensures that when the data stream appears to follow the null, no bet is placed and no wealth is lost. 
We provide an ablation study on the parameter $C$ in Appendix~\ref{appdx:clipping}, which shows that the clipping strategy effectively improves the power of both our method and the standard CTM.
\vspace{-1pt}
\section{Experiments}
\label{sec:exp}

In this section, we test the performance of our proposed conditional CTM, demonstrating its advantages in terms of power and detection speed,\footnote{Recall that we empirically validate the type-I error control of the conditional CTM in Figure~\ref{fig:motivation_combined}.} compared to the standard CTM. We also provide benchmarks against the state-of-the-art pairwise betting approach in more specialized synthetic experiments, as pairwise betting provides a betting function specifically tailored for Gaussian AR(1) processes, establishing a strong benchmark for comparison. 
We consider two use-cases that are common in the literature: (i) change-point detection, and (ii) drift detection. 


\subsection{Synthetic Experiments}
\label{sec:synth}



\begin{figure}[t!]
    \centering
    \includegraphics[width=\columnwidth]{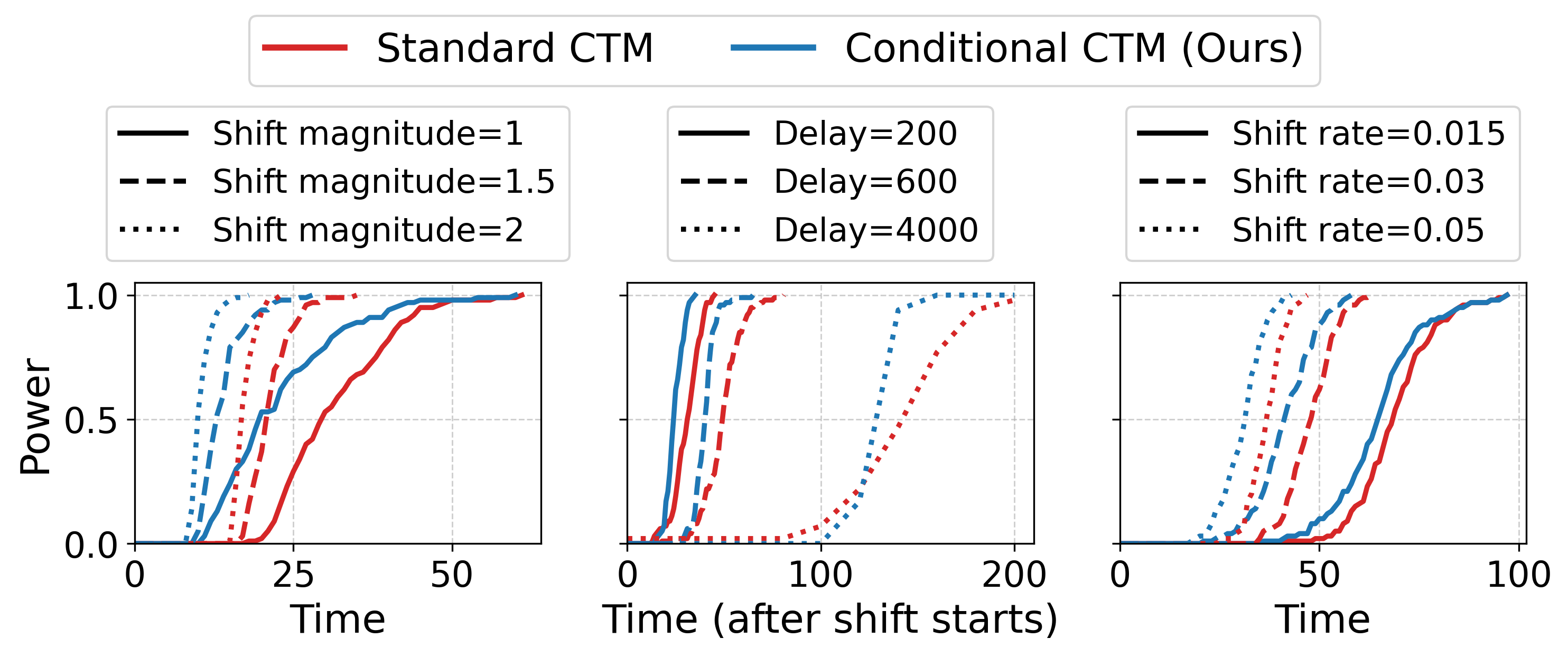}
    \caption{Empirical power of the standard CTM and our conditional CTM in three scenarios. In each scenario, the empirical power is evaluated over 100 trials. \textbf{Left:} an immediate change-point case with different shift magnitudes. \textbf{Middle:} a delayed change-point scenario with different shift delays. \textbf{Right:} gradual shifts of different rates.}
    \label{fig:three_panes}
\end{figure}






\textbf{Experimental setup.}
Throughout the synthetic experiments, we consider the null distribution $P$ to be $\mathcal{N}(0,1)$, and generate the reference set $D_0$ by sampling $n=1000$ independent samples from $P$. We set the target type-I error rate to $\alpha = 0.05$, and construct the CDF confidence bound $\epsilon$ using DKW with a confidence level of $\delta = 0.1$. The 
smoothing parameter is set to $k=10^{-6}$. 
We benchmark our method against the standard CTM from Section~\ref{sec:ctm}. To ensure a fair comparison, we implement the CTM method with ONS (as in our method) to predict the betting parameter; we use the same clipping parameter ${C}=0.1$.

\textbf{Immediate change-point.}
We start by testing the power of our method when the change-point happens immediately, at timestep $t=1$. To this end, we simulate a data stream, sampled from an alternative (shifted) distribution $X_t \sim \mathcal{N}(d,1)$ for all $t \geq 1$, where $d$ denotes the shift magnitude relative to $P$. Figure~\ref{fig:three_panes} (left) presents the empirical power over 100 trials across varying magnitudes of $d$. As can be seen, our conditional CTM consistently achieves higher power and reduced detection delays compared to the standard CTM for all values of $d$. Notably, this advantage is observed despite both methods have the same size of null data $D_0$ that is observed before the shift occurs. 

\textbf{Delayed change-point.}
We now generate data such that the change-point happens at timestep $t_0 \gg 1$, i.e., $X_t \sim P$ for $t < t_0$, and shifts to $X_t \sim \mathcal{N}(2,1)$ for $t \geq t_0$. Figure~\ref{fig:three_panes} (middle) depicts the empirical power as a function of the time steps elapsed after the shift ($t - t_0$) for delays $t_0 \in \{200, 600, 4000\}$.
\sh{As portrayed, for moderate delays ($t_0 \in \{200, 600\}$), our conditional CTM outperforms the standard CTM. Interestingly, when the shift occurs late ($t_0 = 4000$), the standard CTM initially reacts slightly faster immediately following the change-point, as it benefits from the accumulated in-distribution samples. However, as post-shift observations begin to contaminate the standard CTM's growing reference set $D_t$, our conditional CTM overtakes the standard method, ultimately achieving higher power earlier.}


\textbf{Gradual shifts.}
To demonstrate the adaptivity of the test in dynamic environments where severity of the shift increases over time, we simulate data $X_t$ with time-dependent shifted mean, i.e., $X_t \sim \mathcal{N}(\lambda \cdot t,1)$. We vary the slope $\lambda$ to simulate drifts of different speeds. As illustrated in Figure~\ref{fig:three_panes} (right),  our method achieves higher power and faster detection compared to the standard CTM. 


\textbf{Comparison to pairwise betting.}
To provide a comparison to the pairwise betting approach \citep{saha2024testing}, we consider $X_t$ to be a stationary Gaussian AR(1) process, i.e., $X_{t+1} \mid X_t = \mathcal N (aX_t, \sigma^2)$. We set $a=0.7$ and $\sigma^2=1$, and apply the pairwise test as detailed in Section 2.2 of \citep{saha2024testing}, where $a$ and $\sigma^2$ are estimated online from the test data.
Figure~\ref{fig:pairwise}, presented in Appendix~\ref{appdx:synth_figs}, depicts the power of our method, standard CTM and the pairwise test. As observed, our conditional CTM achieves the fastest detection, the standard CTM follows after, and the pairwise method exhibits a significant delay. This delay can be explained by the need to estimate the AR parameters online and its limitation of betting on pairs, meaning it accumulates evidence half as fast as CTM and our test.
\vspace{-3.5pt}
\subsection{Experiments on ImageNet-C}
\label{sec:imgnt}

We evaluate our method on ImageNet-C \citep{hendrycks_2018_2235448}, a standard benchmark designed to assess model robustness against realistic distribution shifts induced by common image corruptions. 
ImageNet-C is derived from the ImageNet validation set by applying 15 corruption types at five severity levels; in this work, unless stated otherwise, we use the highest severity level. Following our testing framework, we assume our reference set $D_0$ of size $n$ is drawn from the clean (regular ImageNet) domain, and then monitor a sequential test stream $(X_t)_{t\ge 1}$ drawn from a corrupted domain. 
The score we use is the entropy of the estimated class probabilities obtained by a fixed pre-trained ViT-Base classifier. That is, $D_0$ consists of the entropies computed on clean images, and for each type of corruption,  $(X_t)$ consists of the entropies computed on corrupted images. More implementation details are in Appendix~\ref{sec:implementation}.

    



\begin{figure}[t]
\centering

\begin{subfigure}[b]{0.26\textwidth}
  \centering
  \includegraphics[width=\textwidth]{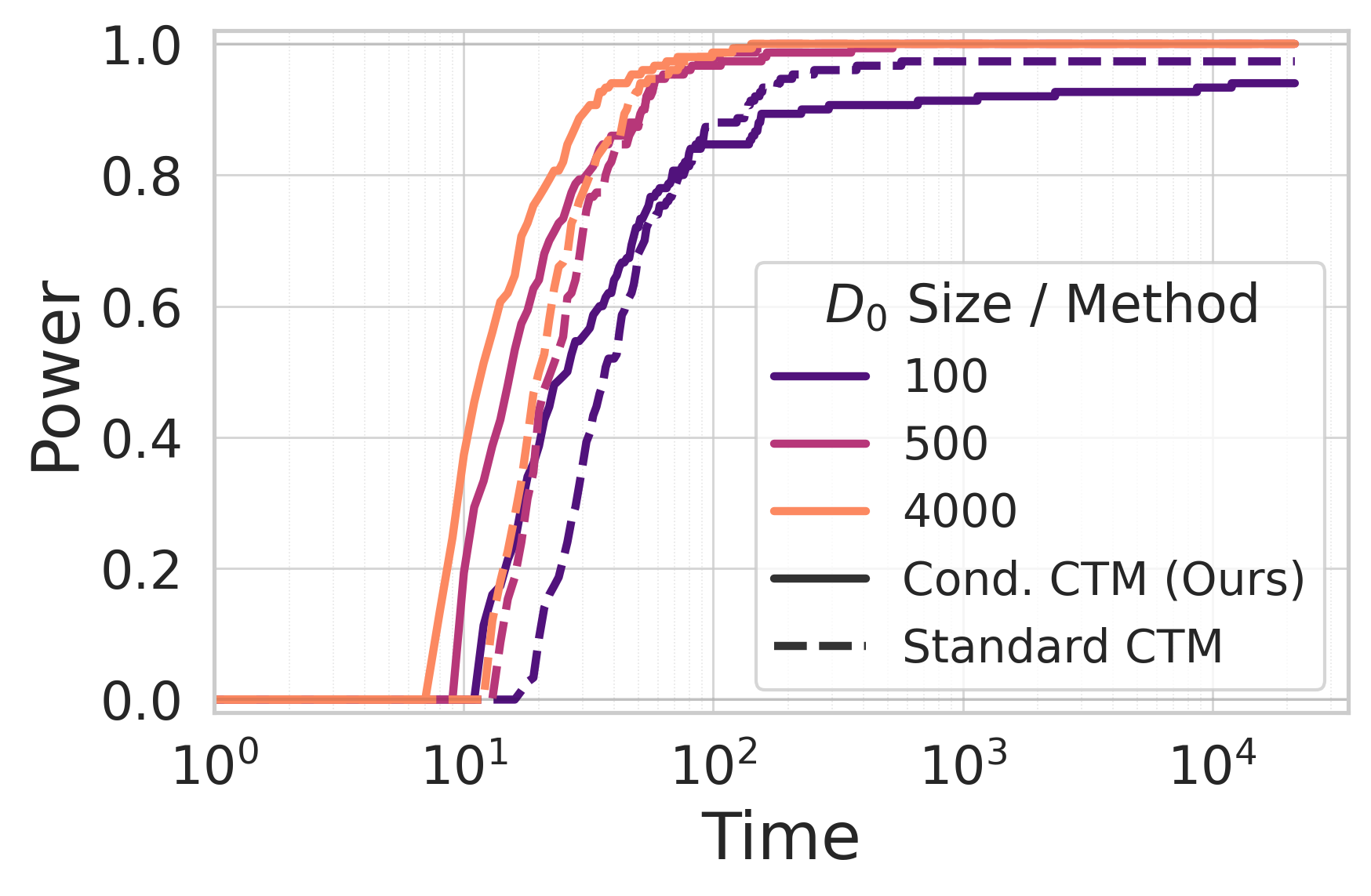}
\end{subfigure}%
\begin{subfigure}[b]{0.23\textwidth}
  \centering
  \includegraphics[width=\textwidth]{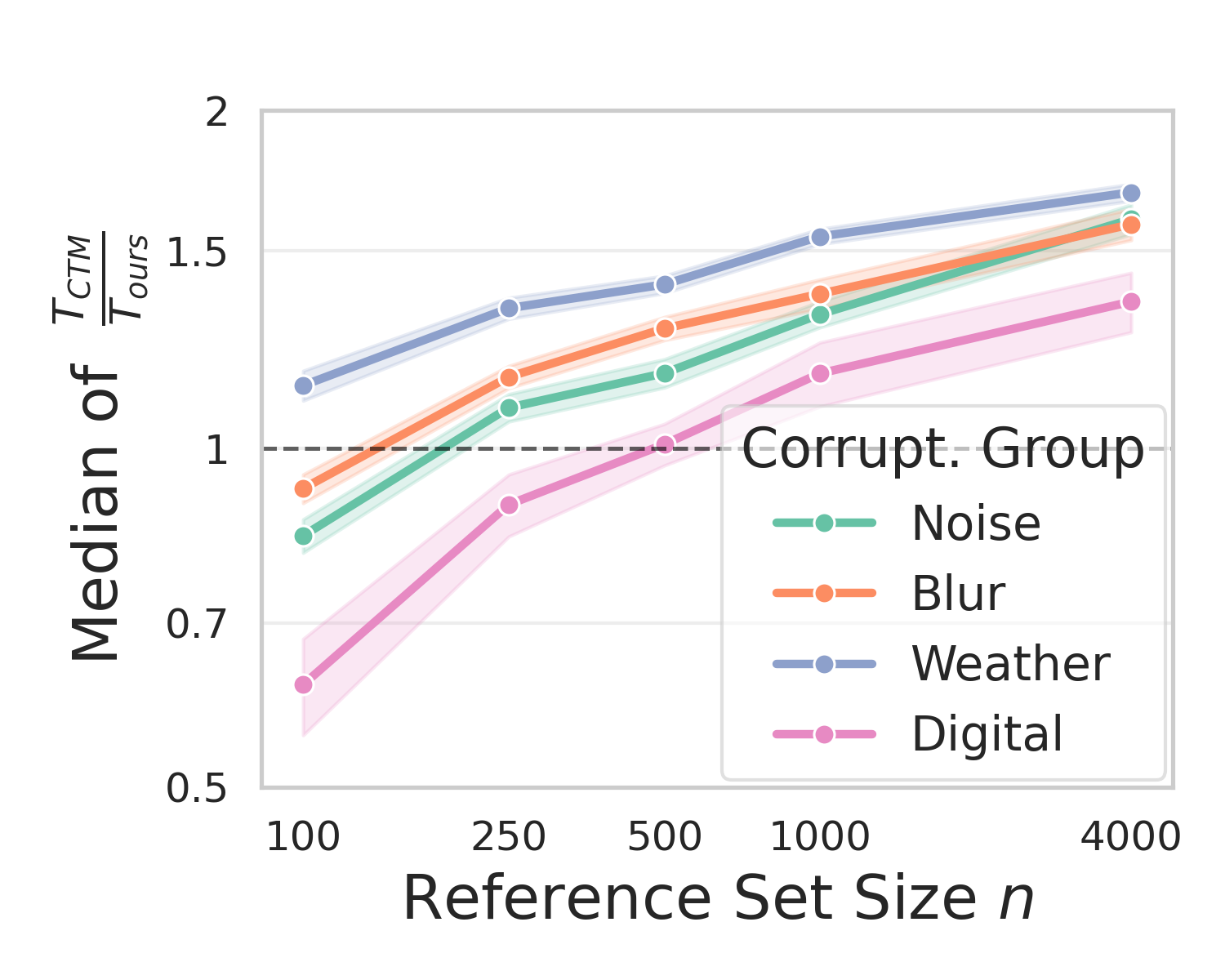}
\end{subfigure}

\caption{\textbf{Empirical power across reference-set sizes on ImageNet-C.} \textbf{Left}: power, evaluated on all 15 corruptions, as a function of the time steps. 
\textbf{Right}: median of the ratios of rejection times across different corruption groups as a function of $|D_0|=n$. Values above $1$ indicate faster detection by conditional CTM.
Shaded regions denote standard error on 10 realizations.}
\label{fig:imagenet}
\end{figure}

We begin by assessing {the effect of the reference set size on the detection power.}
Figure~\ref{fig:imagenet}~(left) reports empirical power evaluated on all 15 corruptions as a function of time steps, comparing conditional CTM to the standard CTM baseline across $|D_0|=n \in \{100,500,4000\}$. 
We can see that our conditional CTM outperforms the standard CTM as long as the reference data is of reasonable size ($n\ge 500$). 


To further examine detection delay across corruption families, we group the 15 corruptions into
Noise, Blur, Weather, and Digital. Figure~\ref{fig:imagenet}~(right) reports the
ratio of median rejection times (standard CTM / conditional CTM), where values $>1$ indicate faster detection by our
conditional CTM. For runs that do not reject by the end of the stream, we treat the detection time as the full test-set length ($37{,}500$ examples).\footnote{This affects only the Digital group at the smallest reference size ($n=100$): out of $150$ runs, $9$ of ours and $4$ of the standard CTM reached the end of the test stream without rejection.}

Following that figure, we can see that our conditional CTM rejects earlier than the standard CTM as $n$ increases, with the magnitude of this speed-up varying across different corruption groups. This demonstrates two key components that influence the effective signal strength $\Delta_k$~\eqref{eq:delta}, and hence the rejection time. First, increasing $n$ shrinks the confidence width, which directly improves the detection power. Second, the varying performance across corruption groups highlights the impact of the deviation between the alternative and the null distributions. Corruptions that induce more subtle distributional shifts (e.g., Digital) result in a smaller signal strength $\Delta_k$, naturally leading to the longer detection delays. However, for corruption types that introduce larger distributional differences, such as Blur and Weather, the resulting signal strength $\Delta_k$ is more substantial. Consequently, our method tends to achieve detection ratios above one, even with moderate reference set sizes.
\section{Discussion}

We introduced the conditional CTM, a novel framework for detecting distribution shifts that addresses the ``test-time contamination'' inherent in standard CTM approaches. Our method controls the type-I error and is supported by power analysis. We showed that our method tends to outperform baseline methods when given a reasonable number of about $500$ reference samples to form $D_0$.

As shown in our experiments, the power of the test is affected by the tightness of the confidence bands; consequently, when $D_0$ is too small, these bands widen, making the test overly conservative. We believe that recent methods that utilize synthetic data can be used to mitigate this limitation \citep{angelopoulos2023prediction,kilian2025anytime,bashari2025statistical,bashari2025synthetic}.

\sh{
In our framework, we use the ONS algorithm to adaptively learn $\eta_t$. We chose ONS since the smoothness of our loss function enables the rigorous regret analysis (Appendix~\ref{appdx:online}) required to establish our power and stopping time guarantees (Section~\ref{sec:power_analysis}). That said, sequential mixture martingales present a compelling alternative \cite{kaufmann2021mixture,waudby2024estimating}. By maintaining a predictive distribution over $\eta$, mixture martingales can adaptively learn the betting parameter. However, since proving formal power guarantees for mixture martingales under our betting function poses distinct theoretical challenges, we leave its exploration for future work.
}

Lastly, \sh{our proposed betting function targets deviations in the mean of the conformal p-values. While common and effective, this may miss deviations that do not alter the mean. Therefore,} another future direction is to extend our conditional CTM test to utilize other betting functions, with the goal of further enhancing statistical power. 


\section*{Acknowledgments and Disclosure of Funding
}
S.~S., Y.~B., and Y.~R. were supported by the European Union (ERC, SafetyBounds, 101163414). Views and opinions expressed are however those of the authors only and do not necessarily reflect those of the European Union or the European Research Council Executive Agency. Neither the European Union nor the granting authority can be held responsible for them. This research was also partially supported by the Israel Science Foundation (ISF grant 729/21).
Y.~R. acknowledges additional support from the Career Advancement Fellowship at the Technion.

\section*{Impact Statement}
This paper presents work whose goal is to advance the field of statistical testing and machine learning. There are many potential societal consequences of our work, none
which we feel must be specifically highlighted here.

\bibliography{example_paper}
\bibliographystyle{icml2026}

\newpage
\appendix
\onecolumn

\section{Experimental Details for Figure~\ref{fig:synth_pval_contamination}}
\label{appdx:synth_pval}

In this experiment, we illustrate the phenomenon of ``test-time contamination'' and its impact on the conformal p-values used for betting. The setup is designed to simulate a scenario where a sustained distribution shift becomes the ``new normal'' for the standard CTM, while our conditional CTM maintains high sensitivity.

To this end, we consider a standard normal null distribution, $P = \mathcal{N}(0,1)$. We sample a reference set $D_0$ of size $n=100$ from $P$. To simulate the data stream, we generate a sequence of test samples consisting of two distinct phases. In-distribution phase: the first 300 time steps consist of samples drawn from the null distribution $P = \mathcal{N}(0,1)$, serving as a baseline period where $H_0$ holds. Out-of-distribution phase: the subsequent $2,000$ time steps consist of samples drawn from a shifted distribution $Q = \mathcal{N}(1,1)$, representing a change-point scenario. 

We compare two strategies for computing conformal p-values on this stream: (1) the standard CTM, which updates its reference set continuously, and (2) our conditional CTM, which relies solely on the fixed reference set $D_0$. Figure~\ref{fig:synth_pval_contamination} plots a local mean of the latest 10 of the resulting conformal p-values over time. As illustrated, immediately following the shift (at $t=300$), both methods react with a spike in p-values, indicating high ``atypicality''. However, the standard CTM suffers from contamination: as the reference set accumulates observations from the shifted distribution $\mathcal{N}(1,1)$, the new data points appear increasingly ``normal'' relative to the history, causing the p-values to decay back toward the in-distribution level of 0.5. In contrast, the conditional CTM maintains consistently high p-values throughout the out-of-distribution phase, preserving the evidence of the shift.

\section{Experimental Details for Figure~\ref{fig:motivation_combined}}
\label{appdx:fig_1}

In this motivating example, we demonstrate the necessity of correctly handling the uncertainty induced by the finite reference set. Let the null distribution be a standard normal, $P = \mathcal{N}(0,1)$. We sample a reference set $D_0$ of different sizes $n$ from $P$, and subsequently simulate a stream of test samples $X_1, X_2, \dots$ also drawn from $P$ (i.e., the null hypothesis $H_0$ holds). We compare two approaches: (1) A naive CTM, which uses the ECDF $\hat{F}_0$ constructed from $D_0$ (instead of $\hat{F}_t$ as detailed in Section~\ref{sec:ctm}), computes the conformal p-values as $p_t = \hat{F}_0(X_t)$, and bets against the uniformity of these values as if they were i.i.d. uniform; (2) Our conditional CTM (detailed in Section~\ref{sec:martingale}), which also uses the fixed $D_0$, but accounts for the dependency and the estimation error arising from it. 
The left panel of Figure~\ref{fig:motivation_combined} illustrates the empirical Type-I error rate of both methods across varying reference set sizes $n$, evaluated over 100 independent trials. As observed, the Naive CTM fails to control the Type-I error, exceeding the level $\alpha$ even for large reference sets. In contrast, our proposed method achieves strict Type-I error control for all values of $n$. This demonstrates that simply plugging an empirical estimate into an online testing framework is insufficient; explicit integration of the estimation uncertainty into the betting mechanism is required to ensure validity.

To complete the motivating example, beyond ensuring validity, we also present the empirical power of our method compared to the standard CTM (as detailed in Section~\ref{sec:ctm}) under an alternative hypothesis. We repeat the experimental setup with $n=2000$, but this time, the test samples $X_1, X_2, \dots$ are drawn from the alternative distribution $\mathcal{N}(1,1)$, representing an immediate shift in the mean. The power of each test is evaluated over 100 independent trials, measuring the power up to time $t$. As illustrated in the right panel of Figure~\ref{fig:motivation_combined}, our proposed robust test achieves higher power and a faster time to rejection compared to the fully-online CTM. This superiority demonstrates that properly leveraging the fixed reference set provides empirical advantages, confirming the benefits of a stable null reference.

\section{Related Work on Testing by Betting}
\label{appdx:betting_related}
The flexibility of the testing by betting paradigm has led to its successful application in numerous sequential analysis problems. On the theoretical side, these range from one- and two-sample tests \citep{shekhar2023nonparametric}, through tests for independence and conditional independence \citep{shaer2023model, podkopaev2023sequential}, to estimating means and bounds of random variables \citep{waudby2024estimating}, and more. For practical applications, testing by betting methods have been developed for continual monitoring of model deployments \citep{vovk2021retrain, podkopaev2022tracking, prinster2025watch}, test-time adaptation \citep{barprotected, schirmer2025monitoring}, 
semantic interpretability \citep{teneggi2024testing}, monitoring risks under unknown shifts \citep{timans2025continuous}, and others \citep{chugg2023auditing, chen2025online, sadhuka2025valuator}.
Of particular relevance to our work, this framework is also used for testing for exchangeability \citep{vovk2021testing, ramdas2022testing, saha2024testing, fischer2025sequential}, and for distributional changes \citep{prinster2025watch, shin2022detectors, vovk2020testing}.

\section{Theoretical Analysis of the Online Learning of $\eta_t$}
\label{appdx:online}

In this section, we adapt the analysis from \citep{dai2025individual, chen2025online} to our setting. The primary modifications address the unique structure of our betting function, where $g_t(\hat{p}_t,\eta_t)$ is not separable into a simple product of $\eta_t$ and a term independent of it. For the sake of readability, all proofs are deferred to Appendix~\ref{appdx:proofs}.

Since $\ell_t(\eta)$~\eqref{eq:loss_ons} is a twice differentiable and 1-exp-concave function, Algorithm~\ref{alg:eta_ons_smooth} can guarantee an upper bound for the regret of choosing $\eta_t$ after $T$ time steps \citep{hazan2007logarithmic, hazan2016introduction}. Formally, the regret is defined as
\begin{equation}
    \label{eq:reg}
    R_T(\eta^*) = \sum_{t=1}^T \ell_t(\eta_t) - \sum_{t=1}^T \ell_t(\eta^*),
\end{equation}
where
\begin{align}
    \label{eq:eta_star}
    \eta^* = \arg\min_{\eta} \sum_{t=1}^T \ell_t(\eta)
\end{align}
is the best decision that minimizes the loss in hindsight, that is, the best choice of $\eta$ after observing all the data. With this in place, the regret bound is stated next. 

\begin{lemma}[Adapted from \citep{dai2025individual,chen2025online}]
    \label{lemma:ONS_regret} Let $\ell_t:[-\frac{1}{2},\frac{1}{2}]\rightarrow \mathbb{R}$ be the loss function at time $t$ as defined in \eqref{eq:loss_ons}. Algorithm~\ref{alg:eta_ons_smooth} 
    guarantees
\begin{align}
\label{eq:reg_bound_ours}
    R_T(\tilde{\eta}) \leq R_T(\eta^*) \leq 2\ln \left(1 + 4T\right) + \frac{1}{2}
\end{align}
    for any benchmark $\tilde{\eta} \in [-\frac{1}{2},\frac{1}{2}]$, where $z_t$ and $a_t$ are defined in Algorithm~\ref{alg:eta_ons_smooth}.
\end{lemma}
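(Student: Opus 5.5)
The plan is the standard ONS regret analysis for exp-concave losses, adapted to the non-separable $g_t$. The first inequality $R_T(\tilde\eta)\le R_T(\eta^*)$ is immediate. Since $\eta^*$ minimizes $\sum_t \ell_t(\eta)$, we have $\sum_t \ell_t(\eta^*)\le \sum_t\ell_t(\tilde\eta)$ for every $\tilde\eta\in[-\tfrac12,\tfrac12]$. All the work is therefore in bounding $R_T(\eta^*)$. Throughout I take $\eta^*$ to be the minimizer over the feasible set $[-\tfrac12,\tfrac12]$.

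\textbf{Step 1: curvature of the loss.} Write $\ell_t(\eta)=-\ln(1+g_t(\eta))$, with $g_t(\eta)=\mathcal{C}(\eta(\hat p_t-0.5)-\sqrt{\eta^2+k^2}\,\epsilon(\hat p_t))$. The map $\eta\mapsto -\sqrt{\eta^2+k^2}\,\epsilon$ is concave, so $g_t$ is concave in $\eta$. Since $-\ln$ is convex and decreasing, $e^{-\ell_t}=1+g_t$ is concave, i.e.\ $\ell_t$ is $1$-exp-concave. From this I want the quadratic lower bound used in ONS. Concavity of $1+g_t$ gives
\[
1+g_t(\eta^*)\le 1+g_t(\eta_t)+g_t'(\eta_t)(\eta^*-\eta_t).
\]
Dividing by $1+g_t(\eta_t)\ge \tfrac12$ (footnote: $g_t\in[-\tfrac12,\tfrac12]$) yields $\ln(1+g_t(\eta^*))-\ln(1+g_t(\eta_t))\le \ln(1+z_t u)$, where $z_t=g_t'(\eta_t)/(1+g_t(\eta_t))$ and $u=\eta^*-\eta_t$.

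I then use $\ln(1+x)\le x-\tfrac{x^2}{4}$ for $|x|\le \tfrac12$. This needs $|z_t u|\le \tfrac12$. We have $|u|\le 1$. Also $|g_t'|\le \mathcal{C}(\tfrac12+\epsilon)\le\tfrac12$ up to the choice of $\mathcal{C}$, because $|\eta|/\sqrt{\eta^2+k^2}\le1$, and $1+g_t\ge\tfrac12$. Together these give $|z_t|\le 1$, which is the one constant I must check carefully against the definition of $\mathcal{C}$. If it is not quite enough, I would use the bound $\ln(1+x)\ge x-x^2$ variant with adjusted constant $\beta$ and absorb the change into the step size $4=1/\beta$. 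The result is the ONS inequality
\[
\ell_t(\eta_t)-\ell_t(\eta^*)\le z_t(\eta_t-\eta^*)-\tfrac{\beta}{2} z_t^2(\eta_t-\eta^*)^2,\qquad \beta=\tfrac12.
\]

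\textbf{Step 2: telescoping.} The update $\eta_{t+1}=\Pi(\eta_t+\tfrac{1}{\beta}z_t/a_t)$ with $a_t=a_{t-1}+z_t^2$ is exactly ONS on the scalar linearized losses $-z_t\eta$, with step $1/\beta=2$ scaled as in the algorithm. I follow Hazan's proof. Projection onto the interval is nonexpansive in the $a_t$-weighted norm, which gives
\[
z_t(\eta_t-\eta^*)\le \frac{1}{2\beta}\frac{z_t^2}{a_t}+\frac{\beta}{2}\Big(a_t(\eta_t-\eta^*)^2-a_t(\eta_{t+1}-\eta^*)^2\Big).
\]
Summing over $t$, the terms $a_t-a_{t-1}=z_t^2$ cancel the negative quadratic terms from Step 1. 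What remains is $\tfrac{\beta}{2}a_0(\eta_1-\eta^*)^2\le\tfrac{\beta}{2}\cdot1\cdot\tfrac14$, which is at most $\tfrac12$, plus $\tfrac{1}{2\beta}\sum_t z_t^2/a_t$. For the latter, $\sum_t z_t^2/a_t\le\ln(a_T/a_0)\le\ln(1+\sum_t z_t^2)$. With $|z_t|^2\le 4$ (a crude bound from $|g'|\le 1$, $1+g\ge\tfrac12$), this is $\le\ln(1+4T)$. The coefficient becomes $2$ after matching $\beta$ to the step size $4$. Together these give $2\ln(1+4T)+\tfrac12$.

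\textbf{Main obstacle.} The main obstacle is Step 1. The loss is non-separable, so the usual identity $\ell_t(\eta)=-\ln(1+\eta h_t)$ is unavailable. I must instead rely on the concavity of $g_t$ to linearize, and verify that the constants ($|z_t|$ bound, $1+g_t\ge\tfrac12$, and $\beta$) are consistent with the step size $4$ and the projection to $[-\tfrac12,\tfrac12]$. Once the exp-concavity-based quadratic bound holds with those constants, the rest is the standard ONS telescoping.
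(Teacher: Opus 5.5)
Your structure is the paper's, written out in full. The paper cites Hazan's ONS lemma: $1$-exp-concavity plus the gradient bound $G\le 2$ on $[-\tfrac12,\tfrac12]$, where $D=1$. This gives $\gamma=\tfrac12\min\{1/(GD),1\}=\tfrac14$, hence step size $1/\gamma=4$ and coefficient $1/(2\gamma)=2$ on $\sum_t z_t^2/a_t$. The paper then bounds $\sum_t z_t^2/a_t\le\ln(a_T/a_0)\le\ln(1+4T)$ exactly as you do. Your argument that $g_t$ is concave in $\eta$, so $e^{-\ell_t}=1+g_t$ is concave, supplies the exp-concavity the paper only asserts. One slip: $\ell_t'(\eta_t)=-z_t$, so the linear term in both of your displays should be $z_t(\eta^*-\eta_t)$. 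You flip it consistently, so the telescoping is unaffected.

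The real gap is the constant check in Step 1, which is wrong as stated. You have $|g_t'|\le\mathcal{C}(\tfrac12+\max\epsilon)\le 1$, not $\le\tfrac12$. For instance, when $\epsilon\approx 0$ we get $\mathcal{C}\approx 2$ and $g_t'\approx 1$ at $\hat p_t=1$. With $1+g_t\ge\tfrac12$ this gives $|z_t|\le 2$, which is the paper's $G=2$ and the bound you use yourself later for $z_t^2\le 4$. So $x=z_tu$ ranges over $(-1,2]$, and there $\ln(1+x)\le x-x^2/4$ fails: at $x=2$, $\ln 3>1$. The failure is attained. Take $\epsilon\equiv 0$ (or small $\epsilon$, by continuity), $\hat p_t=1$, $\eta_t=-\tfrac12$, $\eta^*=\tfrac12$. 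Then $1+g_t(\eta)=1+\eta$ and $\ell_t(\eta_t)-\ell_t(\eta^*)=\ln 3$, while $z_tu-\tfrac14(z_tu)^2=1$. So your per-round curvature inequality with $\beta=\tfrac12$ is false. Your fallback $\ln(1+x)\ge x-x^2$ points the wrong way, since this step needs an upper bound on $\ln(1+x)$.

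The repair is short. First, $1+z_tu>0$ holds automatically, because $0<1+g_t(\eta^*)\le(1+g_t(\eta_t))(1+z_tu)$. Then use $\ln(1+x)\le x-x^2/8$ on $(-1,2]$; it holds on all of $(-1,3]$, since the difference has derivative $x(3-x)/(4(1+x))$. This gives $\beta=\tfrac14$, exactly the value the step size $4$ requires, because the telescoping needs $1/\text{step}\le\beta$. It also removes the mismatch in Step 2, where you use step $1/\beta=2$ although the algorithm uses $4$. Alternatively, follow Hazan: apply concavity to $(1+g_t)^{1/2}=e^{-\ell_t/2}$ and use $-\ln(1-z)\ge z+z^2/4$ for $|z|\le 1$. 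With either fix, your telescoping gives $R_T(\eta^*)\le 2\sum_t z_t^2/a_t+\tfrac18 a_0(\eta_1-\eta^*)^2\le 2\ln(1+4T)+\tfrac1{32}$. This is within the stated bound $2\ln(1+4T)+\tfrac12$.
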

To allow the analysis of the power of the test under the alternative, i.e., in the presence of a distribution shift,
we now form a lower bound for the test martingale at time $T$, using the regret bound presented in Lemma~\ref{lemma:ONS_regret}.


\begin{lemma}
\label{lemma:martingale_reg}
Algorithm~\ref{alg:eta_ons_smooth} guarantees that for any benchmark $\tilde{\eta} \in [-\frac{1}{2},\frac{1}{2}]$:
\begin{align}
\ln S_T &= \ln S_T(\tilde{\eta}) - R_T(\tilde{\eta}) \\ &\geq \ln S_T(\tilde{\eta}) - 2\left(\ln(1 + 4T) + \frac{1}{4}\right), \notag
\end{align}
where 
$S_T := \prod_{t=1}^T (1+g(\hat{p}_t,\eta_t))$ and $S_T(\tilde{\eta}) := \prod_{t=1}^T (1+g(\hat{p}_t,\tilde{\eta}))$ with $g(\hat{p}_t,\eta) = \mathcal{C} \cdot (\eta \cdot (\hat{p}_t - 0.5) -\sqrt{\eta^2+k^2}\cdot \epsilon(\hat{p}_t))$.
In particular, for the benchmark $\eta^*$~\eqref{eq:eta_star} we get
\begin{align}
\ln S_T &= \ln S_T(\eta^*) - R_T(\eta^*) \\ &\geq \ln S_T(\eta^*) - 2\left(\ln(1 + 4T) + \frac{1}{4}\right).\notag 
\end{align}
\end{lemma}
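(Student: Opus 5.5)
This lemma follows almost directly from the definitions together with Lemma~\ref{lemma:ONS_regret}. The plan is to first establish the identity, and then plug in the regret bound.

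\textbf{Step 1: the identity.} By construction $S_T = \prod_{t=1}^T b_t(\hat{p}_t) = \prod_{t=1}^T (1+g(\hat{p}_t,\eta_t))$. The scaling constant $\mathcal{C}$ and the restriction $\eta_t\in[-\frac12,\frac12]$ give $1+g\in[\frac12,\frac32]$, so each factor is strictly positive and the logarithm is well defined. Taking logs and using the definition of the loss in~\eqref{eq:loss_ons}, we get $\ln S_T = -\sum_{t=1}^T \ell_t(\eta_t)$. Likewise, $\ln S_T(\tilde\eta) = -\sum_{t=1}^T \ell_t(\tilde\eta)$ for any fixed benchmark $\tilde\eta\in[-\frac12,\frac12]$. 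Subtracting these two expressions and comparing with the definition of regret~\eqref{eq:reg} gives
$$\ln S_T(\tilde\eta) - \ln S_T = \sum_{t=1}^T \ell_t(\eta_t) - \sum_{t=1}^T \ell_t(\tilde\eta) = R_T(\tilde\eta),$$
which is exactly the claimed equality $\ln S_T = \ln S_T(\tilde\eta) - R_T(\tilde\eta)$.

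\textbf{Step 2: the inequality.} Invoke Lemma~\ref{lemma:ONS_regret}, which states $R_T(\tilde\eta)\le R_T(\eta^*)\le 2\ln(1+4T)+\frac12$. Substituting this into the identity yields
$$\ln S_T \ge \ln S_T(\tilde\eta) - 2\ln(1+4T) - \frac12 = \ln S_T(\tilde\eta) - 2\left(\ln(1+4T)+\frac14\right).$$
The special case follows by taking $\tilde\eta=\eta^*$. This requires $\eta^*$ to be the minimizer over $[-\frac12,\frac12]$, the domain of Lemma~\ref{lemma:ONS_regret}, so I read~\eqref{eq:eta_star} as constrained to that interval.

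\textbf{Main obstacle.} There is essentially none beyond bookkeeping. The only point needing care is to ensure that the $g$ used in $S_T$ is the same smoothed function (same $\mathcal{C}$ and $k$) as in $\ell_t$, so that the identity is exact. All the substantive work, namely the exp-concavity of $\ell_t$ and the ONS regret analysis for a non-separable $g$, is already contained in Lemma~\ref{lemma:ONS_regret}.
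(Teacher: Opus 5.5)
Your proposal is correct and is essentially the paper's proof. Both arguments write $\ln S_T$ and $\ln S_T(\tilde\eta)$ as sums of $\ln(1+g)$, recognize their difference as $R_T(\tilde\eta)$ via the loss~\eqref{eq:loss_ons} and the definition~\eqref{eq:reg}, and then substitute the bound from Lemma~\ref{lemma:ONS_regret}. Your two housekeeping remarks are points the paper leaves implicit: reading $\eta^*$ as the minimizer over $[-\frac12,\frac12]$, and requiring each factor to be positive. For the latter, only $1+g>0$ is needed, and it holds for all $\eta\in[-\frac12,\frac12]$ by the choice of $\mathcal{C}$.
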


The bounds presented above are fundamental to our power analysis in Section~\ref{sec:power_analysis}. Specifically, Lemma~\ref{lemma:martingale_reg} provides a crucial technical step required to establish an asymptotic power of one of the test, as well as to rigorously characterize the expected stopping time under the alternative hypothesis, as detailed in Theorem~\ref{prop:power_one}.

\section{Proofs}
\label{appdx:proofs}

\subsection{Proof of Theorem~\ref{thm:err_control}}
\begin{proof}

We begin with the following lemma, which shows that~\eqref{eq:betting} is a valid betting function, i.e., conditioned on the event that the confidence bound holds, the expected value of $b_t(\hat{p}_t)$ from \eqref{eq:betting} is bounded by $1$ under the null.
\begin{lemma}
\label{thm:val}
Let $\mathcal{F}_{t-1} := \sigma(\hat{p}_1,...,\hat{p}_{t-1})$ be the filtration generated by $\hat{p}_1,...,\hat{p}_{t-1}$.
Under the null distribution, i.e., $X_t \sim P$ for all $t\geq 1$, if $|\hat{p}_t-F(X_t)| \leq \epsilon(\hat{p}_t)$ for all $t\geq 1$ where $F$ is the true CDF of $P$, then the betting function in~\eqref{eq:betting} satisfies $\mathbb{E}_{H_0}[b_t(\hat{p}_t) \mid \mathcal{F}_{t-1}, D_0] \leq 1$. 
\end{lemma}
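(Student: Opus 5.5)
The plan is to condition on $D_0$ and on $\mathcal{F}_{t-1}$ throughout. Under $H_0$ the new point $X_t$ is independent of $D_0$ and of $X_1,\dots,X_{t-1}$, so $\eta_t$ is a fixed number in $[-1,1]$. The map $\hat F_0$ is a fixed deterministic function, and $U_t:=F(X_t)$ has, given $(\mathcal{F}_{t-1},D_0)$, the same law as under $P$. If $P$ is continuous, $U_t\sim\mathcal U[0,1]$ and $\mathbb{E}[F(X_t)]=1/2$. For a general $P$, I would instead use the randomized PIT (or simply note $\mathbb{E}[F(X_t)]\ge 1/2$ and handle the sign carefully); I will write the argument assuming $\mathbb{E}_{H_0}[F(X_t)\mid\mathcal{F}_{t-1},D_0]=1/2$ and flag the atoms case as a technicality. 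Linearity then gives
\[
\mathbb{E}_{H_0}[b_t(\hat p_t)\mid\mathcal{F}_{t-1},D_0]=1+\mathcal{C}\Bigl(\eta_t\,\mathbb{E}[\hat p_t-0.5]-\sqrt{\eta_t^2+k^2}\,\mathbb{E}[\epsilon(\hat p_t)]\Bigr),
\]
with all expectations conditional on $(\mathcal{F}_{t-1},D_0)$.

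The key step is to decompose $\hat p_t-0.5=(F(X_t)-0.5)+(\hat p_t-F(X_t))$. The first term has conditional mean zero, so
\[
\eta_t\,\mathbb{E}[\hat p_t-0.5]=\eta_t\,\mathbb{E}[\hat p_t-F(X_t)]\le |\eta_t|\,\mathbb{E}\bigl[|\hat p_t-F(X_t)|\bigr]\le |\eta_t|\,\mathbb{E}[\epsilon(\hat p_t)],
\]
where the last inequality uses the hypothesis $|\hat p_t-F(X_t)|\le\epsilon(\hat p_t)$. Since $\sqrt{\eta_t^2+k^2}\ge|\eta_t|$ and $\epsilon\ge0$, the bracket is at most $(|\eta_t|-\sqrt{\eta_t^2+k^2})\,\mathbb{E}[\epsilon(\hat p_t)]\le 0$. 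Because $\mathcal{C}>0$, this yields $\mathbb{E}[b_t\mid\cdot]\le 1$.

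I would also verify nonnegativity of $b_t$, which the supermartingale/Ville argument needs. The constant $\mathcal{C}$ is defined so that $|g_t|\le \mathcal{C}(0.5+\sqrt{1+k^2}\max\epsilon)\le 1$, so $b_t\in[0,2]$.

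The main obstacle is the meaning of ``the bound holds for all $t$'' as a conditioning event. The hypothesis must be read as a property of $D_0$: uniform band coverage, as in \eqref{eq:CI}, gives $|\hat F_0(x)-F(x)|\le\epsilon(x)$ for every $x$. This is a $D_0$-measurable event and not an event involving $X_t$, so conditioning on it does not alter the law of $X_t$. I would make this explicit so that the independence of $X_t$ from $(D_0,\mathcal{F}_{t-1})$ survives. There is also a mismatch in the argument of $\epsilon$: the band is indexed by $x$, but the bet uses $\epsilon(\hat p_t)$. I would treat $\epsilon(\hat p_t)$ as shorthand for $\epsilon(X_t)$, which is well defined because the band width depends on $x$ only through $\hat F_0(x)$ for DKW-type bands. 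Finally, the atoms issue for discontinuous $F$ would be handled by replacing $F(X_t)$ with the randomized PIT value $V_t$, whose distance to $\hat p_t$ is still controlled by the band up to the atom mass, or by assuming $P$ is continuous.
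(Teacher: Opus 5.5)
Your proof is correct and follows the paper's route: the same split $\hat p_t-0.5=(F(X_t)-0.5)+(\hat p_t-F(X_t))$, the probability integral transform to remove the first term, and $\sqrt{\eta_t^2+k^2}\ge|\eta_t|$. Your one-line bound $\eta_t(\hat p_t-F(X_t))\le|\eta_t|\,\epsilon(\hat p_t)$ replaces the paper's four-case sign analysis, and reading the hypothesis as the $D_0$-measurable band event is the right way to justify the PIT step, which the paper glosses over. One caveat on your side remark: continuity of $P$ is not a mere technicality but an implicit hypothesis the paper also relies on. For $P=\delta_0$ one has $\hat F_0=F$ and $\hat p_t\equiv 1$, so $b_t>1$ deterministically for $\eta_t=\tfrac12$ whenever $\epsilon(1)<1/(2\sqrt{1+4k^2})$; the lemma as stated therefore fails for atomic $P$, and neither of your suggested patches can repair it without changing the non-randomized p-value $\hat p_t=\hat F_0(X_t)$ or enlarging $\epsilon$.
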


\begin{proof}
It suffices to show that $\mathbb{E}_{H_0}[g(\hat{p}_t,\eta_t) \mid \mathcal{F}_{t-1},D_0] \leq 0$ given that $|\hat{p}_t-F(X_t)| \leq \epsilon(\hat{p}_t)$.
Recall that $\hat{p}_t:=\hat{F}_0(X_t)$, so we rewrite $g(\hat{p}_t,\eta_t)$ as
$$
g(\hat{p}_t,\eta_t) = \mathcal{C} \cdot (\eta_t (\hat{F}_0(X_t) - F(X_t) + F(X_t) - 0.5) - \sqrt{\eta_t^2+k^2}\cdot \epsilon(\hat{p}_t)).
$$
Then, the expectation of $g(\hat{p}_t,\eta_t)$ under the null is
\begin{align}
\mathbb{E}_{H_0}[g(\hat{p}_t,\eta_t) \mid \mathcal{F}_{t-1},D_0] &= \mathcal{C} \cdot \mathbb{E}_{H_0}[\eta_t (\hat{F}_0(X_t) - F(X_t) + F(X_t) - 0.5) -\sqrt{\eta_t^2+k^2}\cdot \epsilon(\hat{p}_t) \mid \mathcal{F}_{t-1},D_0] \\ 
& \leq \mathcal{C} \cdot \mathbb{E}_{H_0}[\eta_t (\hat{F}_0(X_t) - F(X_t) + F(X_t) - 0.5) - |\eta_t| \cdot \epsilon(\hat{p}_t) \mid \mathcal{F}_{t-1},D_0]\\
&= \mathcal{C} \cdot \mathbb{E}_{H_0}[ \eta_t \cdot (\hat{F}_0(X_t) - F(X_t)) - |\eta_t|\cdot \epsilon(\hat{p}_t) \mid \mathcal{F}_{t-1},D_0]
\end{align}
where the last transition is because $\mathbb{E}_{H_0}[F(X_t) - 0.5 \mid \mathcal{F}_{t-1},D_0] = 0$, following the probability integral transform and recall that $\mathcal{C}>0$. 
Now it suffices to show that if $|\hat{F}_0(X_t) - F(X_t)| \leq \epsilon(\hat{p}_t)$ then $\mathbb{E}_{H_0}[\eta_t \cdot (\hat{F}_0(X_t) - F(X_t)) - \eta_t|\cdot \epsilon(\hat{p}_t) \mid \mathcal{F}_{t-1},D_0] \leq 0$, and we do it by splitting it into four cases and show that it is satisfied in each.

\textbf{Case 1} where $0 \leq \hat{F}_0(X_t) - F(X_t) \leq \epsilon(\hat{p}_t)$ and $\eta_t \geq 0$ :
$$
\mathbb{E}_{H_0}[\eta_t \cdot (\hat{F}_0(X_t) - F(X_t)) - |\eta_t|\cdot \epsilon(\hat{p}_t) \mid \mathcal{F}_{t-1},D_0] = \underbrace{\eta_t}_{\geq 0} \cdot \mathbb{E}_{H_0}[ \underbrace{\hat{F}_0(X_t) - F(X_t) - \epsilon(\hat{p}_t)}_{\leq 0} \mid \mathcal{F}_{t-1},D_0] \leq 0.
$$

\textbf{Case 2} where $0 \leq \hat{F}_0(X_t) - F(X_t) \leq \epsilon(\hat{p}_t)$ and $\eta_t < 0$ :
$$
\mathbb{E}_{H_0}[\eta_t \cdot (\hat{F}_0(X_t) - F(X_t)) - |\eta_t|\cdot \epsilon(\hat{p}_t) \mid \mathcal{F}_{t-1},D_0] = \underbrace{\eta_t}_{<0} \cdot \mathbb{E}_{H_0}[\underbrace{\hat{F}_0(X_t) - F(X_t) + \epsilon(\hat{p}_t)}_{\geq 0} \mid \mathcal{F}_{t-1},D_0] \leq 0.
$$

\textbf{Case 3} where $-\epsilon(\hat{p}_t) \leq \hat{F}_0(X_t) - F(X_t) \leq 0$ and $\eta_t \geq 0$ :
$$
\mathbb{E}_{H_0}[\eta_t \cdot (\hat{F}_0(X_t) - F(X_t)) - |\eta_t|\cdot \epsilon(\hat{p}_t) \mid \mathcal{F}_{t-1},D_0] = \underbrace{\eta_t}_{\geq 0} \cdot \mathbb{E}_{H_0}[\underbrace{\hat{F}_0(X_t) - F(X_t) - \epsilon(\hat{p}_t)}_{\leq 0} \mid \mathcal{F}_{t-1},D_0] \leq 0.
$$

\textbf{Case 4} where $-\epsilon(\hat{p}_t) \leq \hat{F}_0(X_t) - F(X_t) \leq 0$ and $\eta_t < 0$ :
$$
\mathbb{E}_{H_0}[\eta_t \cdot (\hat{F}_0(X_t) - F(X_t)) - |\eta_t|\cdot \epsilon(\hat{p}_t) \mid \mathcal{F}_{t-1},D_0] = \underbrace{\eta_t}_{<0} \cdot \mathbb{E}_{H_0}[\underbrace{\hat{F}_0(X_t) - F(X_t) + \epsilon(\hat{p}_t)}_{\geq 0} \mid \mathcal{F}_{t-1},D_0] \leq 0.
$$

\end{proof}

Then, by Ville's inequality, the result of Lemma~\ref{thm:val} is that for any $\alpha \in (0,1)$
$$
\mathbb{P}_{H_0}\left( \exists t \geq 1 : S_{t} \geq 1/\alpha \mid \mathcal{F}_{t-1}, D_0 \right) \leq \alpha \text{ if } |\hat{p}_t-F(X_t)| \leq \epsilon(\hat{p}_t) \text{ for all } t \geq 1.
$$
Since $\mathbb{P}_{D_0}(|\hat{p}_t-F(X_t)| \leq \epsilon(\hat{p}_t) \text{ for all } t \geq 1) \geq 1-\delta$ by construction \eqref{eq:CI}, then 
$$
\mathbb{P}_{D_0}\bigl( \mathbb{P}_{H_0}\left( \exists t \geq 1 : S_t \geq 1/\alpha \mid \mathcal{F}_{t-1}, D_0 \right) \leq \alpha \bigr) \geq 1-\delta.
$$
\end{proof}

\subsection{Proof of Lemma~\ref{lemma:ONS_regret}}
\begin{proof}

To prove Lemma~\ref{lemma:ONS_regret}, we first mention a well known result from the ONS literature \citep{hazan2016introduction}.
\begin{lemma}[\citet{hazan2016introduction}]
    \label{lemma:ONS_regret_orig} Let $\ell_t:[-\frac{1}{2},\frac{1}{2}]\rightarrow \mathbb{R}$ be the loss function at time $t$ as defined in \eqref{eq:loss_ons}. Let $G$ be a bound on the magnitude of the (sub)gradients of $\ell_t$. If $G \leq 2$, then Algorithm~\ref{alg:eta_ons_smooth}
    guarantees
    \begin{equation}
        \label{eq:bound_reg}
        R_T(\tilde{\eta}) \leq 2\sum_{t=1}^T \frac{z_t^2}{a_t} + \frac{a_0}{2}
    \end{equation}
    for any benchmark $\tilde{\eta} \in [-\frac{1}{2},\frac{1}{2}]$, where $z_t$ and $a_t$ are defined in Algorithm~\ref{alg:eta_ons_smooth}. Specifically, it holds for the best decision in hindsight $\eta^*$ as defined in~\eqref{eq:eta_star}:
        \begin{equation}
        \label{eq:bound_reg_star}
        R_T(\eta^*) \leq 2\sum_{t=1}^T \frac{z_t^2}{a_t} + \frac{a_0}{2}.
    \end{equation}
\end{lemma}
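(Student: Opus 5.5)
This is the standard ONS regret bound of \citet{hazan2016introduction} for exp-concave losses on an interval, specialized to our one-dimensional loss $\ell_t$ from \eqref{eq:loss_ons}. I will follow the textbook argument and only verify the two structural properties it uses.

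\textbf{Step 1: local quadratic lower bound.} I first show a quadratic lower bound on $\ell_t$ around $\eta_t$. Write $\ell_t(\eta)=-\ln(1+g_t(\eta))$ with $g_t(\eta)=\mathcal{C}(\eta(\hat p_t-0.5)-\sqrt{\eta^2+k^2}\,\epsilon(\hat p_t))$. Since $\eta\mapsto\sqrt{\eta^2+k^2}$ is convex and $\epsilon\ge 0$, $g_t$ is concave in $\eta$. Since $\ln$ is concave and increasing, $\ell_t$ is convex and $e^{-\ell_t}=1+g_t$ is concave. So $\ell_t$ is $1$-exp-concave on $[-\tfrac12,\tfrac12]$; the scaling $\mathcal{C}$ keeps $1+g_t\in[\tfrac12,\tfrac32]$ there.

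The gradient is $\nabla_t:=\ell_t'(\eta_t)=-g_t'(\eta_t)/(1+g_t(\eta_t))=-z_t$. It is bounded by $G\le 2$, because $|g_t'|\le \mathcal{C}(\tfrac12+\epsilon)\le 1$ and $1+g_t\ge \tfrac12$.

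The textbook lemma (Hazan, Lemma 4.3) then gives, for $\gamma=\tfrac12\min(\tfrac{1}{4GD},\alpha_{\exp})$ with diameter $D=1$, and all $\tilde\eta$ in the interval,
\[
\ell_t(\tilde\eta)\ge \ell_t(\eta_t)+\nabla_t(\tilde\eta-\eta_t)+\tfrac{\gamma}{2}\nabla_t^2(\tilde\eta-\eta_t)^2 .
\]
We have $\gamma=\tfrac1{16}$ for $G=2$, or $\tfrac18$ with $G=1$. I would pick the $\gamma$ that matches the step size $4z_t/a_t=\tfrac{1}{\gamma}\cdot z_t/a_t$ in Algorithm~\ref{alg:eta_ons_smooth}, which means $\gamma=\tfrac14$. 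Since $GD\le 1$ by the tighter bound, I would redo the elementary inequality $\ln(1+x)\le x-x^2/4$ for $|x|\le 1$ directly to get $\gamma=\tfrac14$. That direct derivation is the cleanest route.

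\textbf{Step 2: the potential argument.} The update is $\eta_{t+1}=\Pi(\eta_t+\tfrac{1}{\gamma}z_t/a_t)$, with $a_t=a_0+\sum_{s\le t}z_s^2$ acting as the one-dimensional $A_t$. Projection onto an interval is nonexpansive, even in the $a_t$-weighted norm, since the problem is one-dimensional. Expanding $a_t(\eta_{t+1}-\tilde\eta)^2$ gives
\[
2\nabla_t(\eta_t-\tilde\eta)\le \tfrac{1}{\gamma}\tfrac{\nabla_t^2}{a_t}+\gamma a_t(\eta_t-\tilde\eta)^2-\gamma a_t(\eta_{t+1}-\tilde\eta)^2 .
\]
I sum this over $t$ and combine it with Step 1 to cancel the $\gamma z_t^2(\eta_t-\tilde\eta)^2$ terms. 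The sum telescopes, and the leftover initial term is bounded by $\tfrac{\gamma}{2}a_0 D^2\le \tfrac{a_0}{2}$. The result is
\[
R_T(\tilde\eta)\le \tfrac{1}{2\gamma}\sum_t z_t^2/a_t+\tfrac{a_0}{2},
\]
which equals the claimed $2\sum_t z_t^2/a_t+a_0/2$ when $\gamma=\tfrac14$.

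\textbf{Step 3: specializing to $\eta^*$.} The second statement follows because $\eta^*$ is a point of $[-\tfrac12,\tfrac12]$; I take the argmin over that interval.

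\textbf{Main obstacle.} The hard part is the constant bookkeeping: making the curvature constant $\gamma$ from exp-concavity and the gradient bound consistent with the hard-coded step size $4$ and the factor $2$. I will handle this by proving the quadratic lower bound directly from $-\ln(1+x)\ge -x+x^2/4$ for $|x|\le\tfrac12$. Here I use that $1+g_t(\tilde\eta)\le (1+g_t(\eta_t))\bigl(1+z_t(\tilde\eta-\eta_t)\bigr)$ by concavity of $g_t$, and that $|z_t(\tilde\eta-\eta_t)|\le 1\cdot 1$ needs $|z_t|\le 1$. If only $|z_t|\le 2$ holds, I would fall back on the textbook constants and absorb them.
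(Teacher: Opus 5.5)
The paper gives no derivation of this lemma. It cites the textbook ONS theorem and only checks that $|\ell_t'|\le G=2$. Your re-derivation is therefore the more informative route, and you are right that the textbook constants ($\gamma=\tfrac12\min\{\tfrac{1}{4GD},1\}=\tfrac{1}{16}$ for $G=2$, $D=1$) do not match the hard-coded step $4z_t/a_t$, which forces $\gamma=\tfrac14$. Your Steps 2--3 are correct once a quadratic lower bound with $\gamma=\tfrac14$ is available; these are the one-dimensional projection, the telescoping, and the initial term $\tfrac{\gamma}{2}a_0(\eta_1-\tilde\eta)^2\le a_0/8$.

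\textbf{The gap is in Step 1.} Your primary route applies $\ln(1+x)\le x-x^2/4$ on $|x|\le 1$ with $x=z_t(\tilde\eta-\eta_t)$. This needs $|z_t|\le 1$, which is false. For a constant-width band and $k\to 0$, take $\eta_t=\tfrac12$ and $\hat p_t=0$. Then $g_t\to-\tfrac12$ and $g_t'\to-1$, so $z_t\to-2$. With $\tilde\eta=-\tfrac12$ this gives $x\to 2$, where your inequality fails ($\ln 3>1$).

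Your fallback does not rescue this. You cannot absorb the textbook $\gamma_{\mathrm{lb}}=\tfrac1{16}$, because the algorithm's step fixes $\gamma_{\mathrm{alg}}=\tfrac14$. After telescoping, the term $\tfrac12(\gamma_{\mathrm{alg}}-\gamma_{\mathrm{lb}})\sum_t z_t^2(\eta_t-\tilde\eta)^2$ is left uncancelled, and it can grow linearly in $T$.

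\textbf{The repair uses only what you already have.} You need curvature $\gamma/2=\tfrac18$, i.e.\ $\ln(1+x)\le x-\tfrac{x^2}{8}$, and only on the range actually attained. Your concavity sandwich $1+g_t(\tilde\eta)\le(1+g_t(\eta_t))(1+x)$ gives $1+x>0$, and $x\le|z_t|\,|\tilde\eta-\eta_t|\le 2$. The function $(x-\ln(1+x))/x^2=\int_0^1(1-s)(1+sx)^{-2}\,ds$ is decreasing on $(-1,\infty)$. Its infimum over $(-1,2]$ is therefore $(2-\ln 3)/4\approx 0.225>\tfrac18$. This is the same computation behind the familiar $2/(2-\ln 3)$ step size for ONS betting; the step $4$ is just more conservative.

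With this, Step 1 holds with $\gamma=\tfrac14$ for every $\tilde\eta\in[-\tfrac12,\tfrac12]$. Your Steps 2--3 then give $R_T(\tilde\eta)\le 2\sum_t z_t^2/a_t+a_0/8$, which implies the claim. The concave, non-separable structure of $g_t$ enters only through the sandwich inequality, and that is the point the paper's citation leaves implicit.
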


We first note that 
the bound for the magnitude of the gradient of $\ell_t(\eta)$~\eqref{eq:loss_ons} is 
$|{\C \cdot (\hat{p}_t - 0.5 -\frac{\eta_t}{\sqrt{\eta_t^2+k^2}}\cdot \epsilon(\hat{p}_t))}/{1+\C \cdot (\eta_t \cdot (\hat{p}_t - 0.5) -\sqrt{\eta_t^2+k^2}\cdot \epsilon(\hat{p}_t))}| \leq 2 = G$, 
which meets Lemma~\ref{lemma:ONS_regret_orig} assumptions.

To derive the regret's upper bound for our algorithm, we recall that we set $a_0=1$ and $\eta_t \in [-1/2,1/2]$ for all $t$. 
Next, we show that $\sum_{t=1}^{T}(z_t^2/a_t)$ is capped by an upper bound. Using the inequality $\frac{a-b}{a} \leq \ln \frac{a}{b}$, which stems from the first-order Taylor expansion of $\ln(a/b)$ where $a, b \in \mathbb{R}_{+}$, we have: 
\begin{align}
\sum_{t=1}^{T} \frac{z_t^2}{a_t} &= \sum_{t=1}^{T} \frac{1}{a_t} \cdot (a_t - a_{t-1}) \\ &\leq \sum_{t=1}^{T} \ln \left(\frac{a_t}{a_{t-1}}\right) = \ln \left(\frac{a_T}{a_0}\right). \notag
\end{align}

In our setting, $a_T = a_0 + \sum_{t=1}^{T} z_t^2$, where $a_0 = 1$, $z_t = \frac{g'_t}{1+g_t}$. We recall inequalities~\eqref{eq:bound_reg} and~\eqref{eq:bound_reg_star},
the upper bound of the regret is
\begin{align}
\label{eq:reg_bound_our}
    R_T(\tilde{\eta}) \leq R_T(\eta^*) &\leq   2\sum_{t=1}^T \frac{z_t^2}{a_t} + \frac{1}{2} a_0 \\ \notag
    &\leq 2\ln \left(\frac{a_T}{a_0}\right) + \frac{1}{2} \\ \notag
    &= 2 \ln \left(1 + \sum_{t=1}^{T} \frac{{g'_t}^2}{(1+g_t)^2}\right) + \frac{1}{2} \\ \notag
    &\leq   2\ln \left(1 + 4\sum_{t=1}^{T} {g'_t}^2\right) + \frac{1}{2} \\ \notag
    &\leq  2\ln \left(1 + 4T\right) + \frac{1}{2}. \notag
\end{align}

The first inequality is since the highest regret is for the best decision in hindsight $\eta^*$. The penultimate transition is since $\eta_t \in [-1/2, 1/2]$ then $(1 + g_t)^2 \geq \frac{1}{4}$. The last inequality is since $g'_t \in [-1,1]$ for all $t$.
\end{proof}
\subsection{Proof of Lemma~\ref{lemma:martingale_reg}}

\begin{proof}

We can express $\ln S_T$ as
\begin{align}
\label{eq:log_martingale}
\ln S_T = \sum_{t=1}^T \ln(1 + g(\hat{p}_t,\eta_t)),
\end{align}

and $\ln S_T(\tilde{\eta})$ as
\begin{align}
\label{eq:log_martingale_benchmark}
\ln S_T(\tilde{\eta}) = \sum_{t=1}^T \ln(1 + g(\hat{p}_t,\tilde{\eta})). 
\end{align}

By subtracting equation~\eqref{eq:log_martingale} from~\eqref{eq:log_martingale_benchmark} on both sides, we obtain the following
\begin{align}
\ln S_T(\tilde{\eta}) - \ln S_T &= \sum_{t=1}^T \ln(1 + g(\hat{p}_t,\tilde{\eta})) - \sum_{t=1}^T \ln(1 + g(\hat{p}_t,\eta_t)) \\
&= -\sum_{t=1}^T \ln(1 + g(\hat{p}_t,\eta_t)) - \left(-\sum_{t=1}^T \ln(1 + g(\hat{p}_t,\tilde{\eta}))\right) \\
&= \sum_{t=1}^T -\ln(1 + g(\hat{p}_t,\eta_t)) - \sum_{t=1}^T -\ln(1 + g(\hat{p}_t,\tilde{\eta})) = R_T(\tilde{\eta}).
\end{align}

The last equation followed by the definition of the regret~\eqref{eq:reg} with the loss defined in~\eqref{eq:loss_ons}. 

Consequently, we have
\begin{align}
\label{eq:martingale_reg}
\ln S_T = \ln S_T(\tilde{\eta}) - R_T(\tilde{\eta}),
\end{align}
and applying the regret bound from equation~\eqref{eq:reg_bound_ours} in Lemma~\ref{lemma:ONS_regret}  leads to the result. 
\end{proof}

\subsection{Proof of Lemma~\ref{lemma:expected_bet}}
\label{appdx:proof_lemma_expected_bet}
\begin{proof}

We first state an auxiliary lemma, which will serve us in the following proof.

\begin{lemma}
\label{lemma:log_lb}
If a random variable $Z$ satisfies $Z \geq a > 0$ almost surely, then
$$
\mathbb{E}[\ln Z] \geq \ln \mathbb{E}[Z] - \frac{\text{Var}(Z)}{2a^2}.
$$
\end{lemma}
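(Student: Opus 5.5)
The plan is to write $Z = \mathbb{E}[Z] + (Z-\mathbb{E}[Z])$ and lower-bound $\ln$ by a quadratic that is exact in its first-order term. Set $\mu := \mathbb{E}[Z]$, which satisfies $\mu \geq a > 0$ since $Z \geq a$ almost surely. Define $h(z) := \ln z$ on $[a,\infty)$. The target is the pointwise inequality
$$
\ln z \;\geq\; \ln \mu + \frac{z-\mu}{\mu} - \frac{(z-\mu)^2}{2a^2}\qquad \text{for all } z \geq a .
$$
Taking expectations then kills the linear term, because $\mathbb{E}[Z-\mu]=0$, and leaves exactly $\ln \mu - \mathrm{Var}(Z)/(2a^2)$. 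This is the claim.

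To prove the pointwise inequality, I would use Taylor's theorem with Lagrange remainder around $\mu$:
$$
\ln z = \ln\mu + \frac{z-\mu}{\mu} - \frac{(z-\mu)^2}{2\xi^2}
$$
for some $\xi$ between $z$ and $\mu$. Both $z$ and $\mu$ lie in $[a,\infty)$, so $\xi \geq a$. Hence $1/(2\xi^2) \leq 1/(2a^2)$, which gives the pointwise bound. Equivalently, one can check directly that $\phi(z) := \ln z - \ln\mu - (z-\mu)/\mu + (z-\mu)^2/(2a^2)$ is convex on $[a,\infty)$, since $\phi''(z) = -1/z^2 + 1/a^2 \geq 0$. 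It also satisfies $\phi(\mu)=0$ and $\phi'(\mu)=0$, so $\phi \geq 0$.

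The only points needing care are integrability and $\mathrm{Var}(Z)=\infty$. If the variance is infinite, the bound is trivial. Otherwise $Z$ is square integrable, so every term on the right is integrable, and $\ln Z \geq \ln a$ means $\mathbb{E}[\ln Z]$ is well defined. There is no real obstacle. The main step is noticing that the remainder's curvature $1/\xi^2$ is uniformly controlled by the almost-sure lower bound $a$, and this works because $\mu$ itself is $\geq a$.
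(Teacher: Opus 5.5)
Your proposal is correct and follows essentially the same route as the paper: a second-order Taylor lower bound for $\ln$ around $\mathbb{E}[Z]$, with the curvature controlled by $1/a^2$ because both $Z$ and $\mathbb{E}[Z]$ lie in $[a,\infty)$, followed by taking expectations so the linear term vanishes. Your explicit treatment of the Lagrange remainder point $\xi \geq a$ and of integrability (finite versus infinite variance, and $\ln Z \geq \ln a$) supplies details the paper leaves implicit.
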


\begin{proof}
On $[a,\infty)$, $\phi(y) = \ln y$ is concave with $\phi''(y) = -1/y^2 \geq -1/a^2$. By Taylor expansion around $m = \mathbb{E}[Z]$:
$$\phi(y) \geq \phi(m) + \phi'(m)(y-m) - \frac{1}{2a^2}(y-m)^2.$$
Setting $y = Z$ and taking expectations, the linear term $\mathbb{E}[Z - \mathbb{E}[Z]] = 0$ vanishes, yielding the result.
\end{proof}

Now, recall that $\hat{p}_t = \hat{F}_0(X_t) \in [0,1]$ and 
$$g(\hat{p},\eta) = \mathcal{C} \cdot \left(\eta \cdot (\hat{p} - 0.5) - \sqrt{\eta^2+k^2}\cdot \epsilon(\hat{p}_t)\right),$$
where $\eta \in [-\frac{1}{2}, \frac{1}{2}]$.
Define $\mu := \mathbb{E}_{H_1}[\hat{p}_t - 0.5 \mid D_0]$ and $\sigma^2 := \text{Var}(\hat{p}_t \mid D_0) \leq \frac{1}{4}$ (bounded since $\hat{p}_t \in [0,1]$). Without loss of generality, assume $\mu > 0$ (the case $\mu < 0$ follows by symmetry).

We begin by bounding $\mathbb{E}_{H_1}[\ln(1 + g(\hat{p},\eta)) \mid D_0]$ from below for a specific choice of $\eta$, and then ensure the positivity of the bound.

Since $\mu > \sqrt{1+k^2} \Bar \epsilon > \Bar \epsilon$, we can choose $\lambda > \frac{ \Bar \epsilon}{\sqrt{\mu^2- \Bar \epsilon^2}}$ and define
$$\eta := \lambda k,$$
for sufficiently small $k$ such that $\eta \in (0, \frac{1}{2})$, i.e., $k\leq \frac{1}{2\lambda}$.

Define the auxiliary quantity
$$\rho := \mu \lambda - \Bar \epsilon\sqrt{1+\lambda^2} > 0,$$
where positivity holds by our choice of $\lambda$ satisfying $\lambda^2(\mu^2 - \Bar \epsilon^2) > \Bar \epsilon^2 \Rightarrow \mu^2 \lambda^2 > \Bar \epsilon^2 + \Bar \epsilon^2 \lambda^2$.
To be able to bound $\mathbb{E}_{H_1}[\ln(1 + g(\hat{p},\eta)) \mid D_0]$, consider the worst case for $1 + g(\hat{p}, \eta)$, which occurs when $\hat{p}=0$ with $\Bar \epsilon^* := \max_{\hat{p} \in [0,1]}\Bar \epsilon(\hat{p})$, i.e., 
$$1 + g(\hat{p}, \eta) \geq 1 - \mathcal{C}\left(-\frac{1}{2} \eta - \epsilon^* \sqrt{\eta^2 + k^2} \right) = 1 - \mathcal{C} k \left(\frac{\lambda}{2} + \epsilon^*\sqrt{1+\lambda^2}\right) =: a(k).$$
Hence, if $k < k_1 := \frac{1}{2\mathcal{C}(\frac{\lambda}{2} + \epsilon^*\sqrt{1+\lambda^2})} > 0$, then $a(k) \geq \frac{1}{2}$, and we will use it hereafter when using Lemma~\ref{lemma:log_lb}. Note that $k_1 <\frac{1}{2\lambda}$, which ensures that $ \eta \in [-1/2,1/2]$.

By the construction of $\rho$ we get:
$$\mathbb{E}_{H_1}[1 + g(\hat{p},\eta) \mid D_0] = 1 + \mathcal{C} k \rho,$$
and for $k \leq k_2 := \frac{1}{\mathcal{C}\rho}>0$, we have $\mathcal{C} k \rho \in (0,1]$. Using $\ln(1+z) \geq z/2$ for $z \in (0,1]$:
$$\ln \mathbb{E}_{H_1}[1 + g(\hat{p},\eta) \mid D_0] \geq \frac{\mathcal{C} \rho k}{2}.$$

Now, for $k \leq \min\{k_1,k_2\}$ then $a(k) \geq 1/2$, and by applying Lemma~\ref{lemma:log_lb}, we get:

\begin{align}
\mathbb{E}_{H_1}[\ln(1 + g(\hat{p},\eta)) \mid D_0] &\geq \ln \mathbb{E}_{H_1}[1 + g(\hat{p},\eta) \mid D_0] - \frac{\mathcal{C}^2 \lambda^2 k^2 \sigma^2}{2 \cdot (\underbrace{a(k)}_{\geq 1/2})^2}\\
&\geq \frac{\mathcal{C} \rho k}{2} - 2\mathcal{C}^2 \lambda^2 k^2 \sigma^2,
\end{align}
where we use the fact that $\text{Var}(g(\hat{p},\eta) \mid D_0) = (\mathcal{C} \lambda k)^2 \sigma^2 = \mathcal{C}^2 \lambda^2 k^2 \sigma^2$. 

Lastly, to ensure positivity, we choose $k < k_3 := \frac{\rho}{8 \mathcal{C} \lambda^2 \sigma^2}>0$, with the convention of $k_3 = \infty$ if $\sigma^2 = 0$. Then:
$$\mathbb{E}_{H_1}[\ln(1 + g(\hat{p},\eta)) \mid D_0] \geq \frac{\mathcal{C} \rho k}{4} > 0.$$

Therefore, for all $k < k_0 := \min\{k_1, k_2, k_3\}>0$, we have constructed an $\eta \in [-1/2,1/2]$ such that $\mathbb{E}_{H_1}[\ln(1 + g(\hat{p},\eta)) \mid D_0] > 0$, which implies
$$\max_{\eta \in [-1/2,1/2]} \mathbb{E}_{H_1}[\ln(1 + g(\hat{p},\eta)) \mid D_0] > 0.$$
\end{proof}

{\color{black}
\subsection{Proof of Theorem~\ref{prop:power_one}}



\begin{proof} \ \\
\textbf{Asymptotic power one.}
Asymptotic power $\beta = 1$ means that when $H_1$ holds, our algorithm will ensure that $S_t \geq 1/\alpha$ for a given $\alpha \in (0,1)$ in finite time $t$, that is:
\begin{align}
\mathbb{P}_{H_1}(\tau = \infty \mid D_0) \leq 1 - \beta = 0,
\end{align}
where $\tau$ is the stopping time, defined as $\tau=\inf\{t\geq 1:S_t\geq 1/\alpha\}.$ We denote the domain of $\eta$ by $\mathcal{D} = [-1/2,1/2]$, and for the rest of the proof we assume $H_1$ is true and $D_0$ is given and omit the corresponding notations for readability.
Denote $\omega_* := \max_{\eta \in \mathcal{D}} \E{\ln(1 + g(\hat{p},\eta))}>0$ the expected wealth of the best benchmark in a single round, which is greater than $0$ following Lemma~\ref{lemma:expected_bet}, and by $\eta_* := \arg\max_{\eta \in \mathcal{D}} \E{\ln(1 + g(\hat{p},\eta))}$ the corresponding benchmark.
Taking the expectation on both sides of equation~\eqref{eq:martingale_reg}, we have
\begin{align}
\mathbb{E}[\ln S_t] &= \mathbb{E}[\ln S_t(\eta_*)] - \mathbb{E}[R_t(\eta_*)] \\
&= \mathbb{E}\left[\sum_{s=1}^t \ln(1 + g(\hat{p}_s,\eta_*))\right] - \mathbb{E}[R_t(\eta_*)] \\
&= t\omega_* - \mathbb{E}[R_t(\eta_*)],
\end{align}
where the last equality holds by assuming that the random variables $(g(\hat{p}_s,\eta_*))_{s \geq 1}$ are i.i.d given $D_0$.
We now analyze the probability that the null has not been rejected by time $t$, which is when the event $\{S_t < \frac{1}{\alpha}\}$ holds.
\begin{align}
\label{eq:not_rej_prob}
\mathbb{P}\left[S_t < \frac{1}{\alpha}\right] &= \mathbb{P}\left[\ln S_t < \ln\left(\frac{1}{\alpha}\right)\right] \\
&= \mathbb{P}\left[\ln S_t - \mathbb{E}[\ln S_t(\eta_*)] < \ln\left(\frac{1}{\alpha}\right) - \mathbb{E}[\ln S_t(\eta_*)]\right] \\
&= \mathbb{P}\left[\ln S_t(\eta_*) - R_t(\eta_*) - \mathbb{E}[\ln S_t (\eta_*)] < \ln\left(\frac{1}{\alpha}\right) - \mathbb{E}[\ln S_t(\eta_*)]\right] \\
&= \mathbb{P}\left[\ln S_t(\eta_*) - \mathbb{E}[\ln S_t(\eta_*)] < R_t(\eta_*) + \ln\left(\frac{1}{\alpha}\right) - t\omega_*\right],
\end{align}
where we use equation~\eqref{eq:martingale_reg} in the penultimate transition.
Now we are going to show that $R_t(\eta_*) + \ln\left(\frac{1}{\alpha}\right) - t\omega_* \leq \frac{-t\omega_*}{2}$ when $t$ is sufficiently large. Then, from equation~\eqref{eq:not_rej_prob}, we will have
\[
\mathbb{P}\left[\ln S_t(\eta_*) - \mathbb{E}[\ln S_t(\eta_*)] <R_t(\eta_*) + \ln\left(\frac{1}{\alpha}\right) - t\omega_*\right]
\leq
\mathbb{P}\left[\ln S_t(\eta_*) - \mathbb{E}[\ln S_t(\eta_*)] < -\frac{t}{2}\omega_*\right].
\]
From the regret bound of Algorithm~\ref{alg:eta_ons_smooth} in equation~\eqref{eq:reg_bound_ours}, it suffices to show that
\begin{align}
\label{eq:midterm}
2\ln\left(1 + 4t\right) + \frac{1}{2} + \ln\left(\frac{1}{\alpha}\right) \leq \frac{t}{2}\omega_*.
\end{align}
We will require the time $t$ to satisfy $t \geq \frac{2}{\omega_*}$, which leads to $\frac{1}{2} \leq \frac{t\omega_*}{4}$. Therefore, to guarantee equation~\eqref{eq:midterm}, it suffices to find $t$ such that
\begin{align}
\frac{t\omega_*}{4} \geq \ln\left(\frac{1}{\alpha}\right) + 2\ln\left(1 + 4t\right).
\end{align}
In particular, defining
\begin{equation}
\label{eq:t_cond_explicit}
t_* := \left\lceil \frac{16}{\omega_*}\ln\left(\frac{1}{\alpha\omega_*}\right)\right\rceil,
\end{equation}
one has that equation~\eqref{eq:midterm} holds for all $t\ge t_*$, since $t$ dominates $2\ln (1+4t)$ when $t$ is not small. Note that there exist such $t$ since we assume $\omega_*>0$ by Lemma~\ref{lemma:expected_bet}. Therefore, for all $t\ge t_*$,
\begin{align}
\label{eq:midterm_prob}
\mathbb{P}\left[S_t < \frac{1}{\alpha}\right] \leq \mathbb{P}\left[\ln S_t(\eta_*) - \mathbb{E}[\ln S_t(\eta_*)] < -\frac{t}{2}\omega_*\right].
\end{align}

Denote
\[
\psi_t := \ln(1+ g(\hat{p}_t,\eta_*)) - \mathbb{E}[\ln(1 + g(\hat{p}_t,\eta_*))] \in [\ln(1/2) - \omega_*, \ln(3/2) - \omega_*].
\]
We have that $\ln S_t(\eta_*) - \mathbb{E}[\ln S_t(\eta_*)] = \sum_{s=1}^t \psi_s$, which is a sum of zero-mean i.i.d bounded random variables. Denote $\sigma_*^2 := \operatorname{Var}(\psi_t)$.
By Bernstein's inequality, for any $c>0$,
\begin{align}
\label{eq:bern}
\mathbb{P}\left[\frac{1}{t}\sum_{s=1}^t \psi_s \leq -c\right]
\leq
\exp\left(
-\frac{t c^2}{2\sigma_*^2 + \frac{2}{3}(\ln 3)c}
\right).
\end{align}
Then, letting $c=\frac{\omega_*}{2}$ in~\eqref{eq:bern},
\begin{align}
\label{eq:midterm_prob_bern}
\mathbb{P}\left[\ln S_t(\eta_*) - \mathbb{E}[\ln S_t(\eta_*)] < -\frac{t}{2}\omega_*\right]
&=
\mathbb{P}\left[\frac{1}{t}\sum_{s=1}^t \psi_s < -\frac{1}{2}\omega_*\right] \\
&\leq
\exp\left(
-\frac{t\omega_*^2}{8\sigma_*^2 + \frac{4}{3}(\ln 3)\omega_*}
\right).
\end{align}
Hence, by combining equations~\eqref{eq:midterm_prob} and~\eqref{eq:midterm_prob_bern} we obtain that for all $t\ge t_*$,
\begin{align}
\label{eq:final_bern_power}
\mathbb{P}\left[S_t < \frac{1}{\alpha}\right]
\leq
\exp\left(
-\frac{t\omega_*^2}{8\sigma_*^2 + \frac{4}{3}(\ln 3)\omega_*}
\right).
\end{align}
Let $A_t$ be the event that we stop at time $t$. Then,
\begin{align}
\mathbb{P}[\tau = \infty] &= \lim_{\tau \to \infty} \mathbb{P}[\cap_{t \leq \tau} \neg A_t] \\
&\leq \lim_{\tau \to \infty} \mathbb{P}[\neg A_\tau] \\
&= \lim_{\tau \to \infty} \mathbb{P}\left[S_\tau < \frac{1}{\alpha}\right] \\
&\leq \lim_{\tau \to \infty} \exp\left(
-\frac{\tau\omega_*^2}{8\sigma_*^2 + \frac{4}{3}(\ln 3)\omega_*}
\right) = 0,
\end{align}
where the last inequality is by equation~\eqref{eq:final_bern_power}, and it equals to zero since $\omega_*>0$ by Lemma~\ref{lemma:expected_bet}. This completes the proof of a test with asymptotic power one.

\textbf{Expected stopping time.}
Now we upper-bound the expected stopping time as follows. Denote $t_*$ the time when equation~\eqref{eq:midterm_prob} starts to hold at any $t \geq t_*$. We have
\begin{align}
\label{eq:expected_tau}
\mathbb{E}[\tau] &= \sum_{t=1}^\infty \mathbb{P}[\tau > t] \\
&= \sum_{t=1}^\infty \mathbb{P}[\cap_{s \leq t} \neg A_s] \\
&\leq \sum_{t=1}^\infty \mathbb{P}[\neg A_t] \\
&= \sum_{t=1}^\infty \mathbb{P}\left[S_t < \frac{1}{\alpha}\right] \\
&\leq t_* + \sum_{t=t_*}^\infty \mathbb{P}\left[S_t < \frac{1}{\alpha}\right].
\end{align}

To proceed, we first denote by $g_k(\hat{p},\eta)$ as $g(\hat{p},\eta)$ with a specific choice of $k$. Next, we recall the notation $\Bar \epsilon = \E{\epsilon(\hat{p})}$, and since we assume $k\rightarrow 0$ then $\omega_*$ can be written as $\omega_* := \max_{\eta \in \mathcal{D}} \lim_{k \rightarrow 0} \E{\ln(1 + g(\hat{p},\eta))}>0$.

By the inequality $\ln(1 + c) \geq c - c^2$ for any $|c| \leq \frac{1}{2}$, we have
\begin{align}
 \omega_* &= \max_{\eta \in \mathcal{D}} \lim_{k \rightarrow 0}  \E{\ln(1 + g(\hat{p},\eta))}
= \max_{\eta \in \mathcal{D}}\E{\ln(1 + g_0(\hat{p},\eta))}
\geq \max_{\eta \in \mathcal{D}} \mathbb{E}\left[g_0(\hat{p},\eta) - (g_0(\hat{p},\eta))^2\right] \\
 &= \max_{\eta \in \mathcal{D}} \left[ \mathbb{E}\left[g_0(\hat{p},\eta)\right] - \text{Var}\left[g_0(\hat{p},\eta)\right] - \mathbb{E}\left[ g_0(\hat{p},\eta)\right]^2 \right]\\
 & = \max_{\eta \in \mathcal{D}} \mathcal{C}(\eta\mathbb{E}\left[\hat p-0.5\right]-|\eta|\Bar \epsilon)  - \mathcal{C}^2 \text{Var}\left[\eta(\hat{p}-0.5)-|\eta|\epsilon(\hat{p})\right] - \mathcal{C}^2\left(\eta\mathbb{E}[\hat p-0.5]-|\eta|\Bar \epsilon\right)^2.
\end{align}
where the first and last transitions are since $\lim_{k \rightarrow 0} \sqrt{\eta^2 + k^2} = \left.\left[\sqrt{\eta^2 + k^2}\right] \right|_{k=0} = |\eta|$. Now, we split it into two cases, when $\mathbb{E}\left[\hat p-0.5\right]>0$ and $\mathbb{E}\left[\hat p-0.5\right]<0$, and found a bound for each case. We denote by $\mathcal{D}_+=[0,1/2]$ and $\mathcal{D}_-=[-1/2,0]$ the non-negative and non-positive domains of $\eta$, respectively. We note that since $\Delta_0>0$ and $g(\hat{p},\eta) \in [-1,1]$, then to maximize $\E{\ln(1 + g(\hat{p},\eta))}$, $\eta$ must have the same sign as $\mathbb{E}\left[\hat p-0.5\right]$. Therefore:
\begin{itemize}
    \item \textbf{Case 1: $\mathbb{E}\left[\hat p-0.5\right]>0$}
\begin{align}
 \omega_* & = \max_{\eta \in \mathcal{D_+}} \mathcal{C}(\eta\mathbb{E}\left[\hat p-0.5\right]-\eta\epsilon)  - \mathcal{C}^2 \eta^2\text{Var}\left[\hat p-0.5-\epsilon(\hat{p})\right] - \mathcal{C}^2\left(\eta\mathbb{E}[\hat p-0.5]-\eta \epsilon\right)^2 \\
 &= \max_{\eta \in \mathcal{D_+}} \mathcal{C}(\eta|\mathbb{E}\left[\hat p-0.5\right]|-\eta\epsilon)  - \mathcal{C}^2 \eta^2\text{Var}\left[\hat p-0.5-\epsilon(\hat{p})\right] - \mathcal{C}^2\left(\eta|\mathbb{E}[\hat p-0.5]|-\eta \epsilon\right)^2\\
 & = \max_{\eta \in \mathcal{D}_+} \mathcal{C} \eta \cdot \Delta_0 - \mathcal{C}^2 \eta^2 \left(\text{Var}\left[\hat p-0.5-\epsilon(\hat{p})\right] + \Delta_0 ^2\right) \\
 &= \frac{\Delta_0^2}{4(\text{Var}[\hat p-0.5-\epsilon(\hat{p})] + \Delta_0^2)} \geq \frac{\Delta_0^2}{4(1 + \Delta_0^2)},
\end{align}
where $\text{Var}[\hat p-0.5-\epsilon(\hat{p})] \leq 1$, and the third transition is since $k \rightarrow 0$. The last transition is found by finding the maximum of a quadratic term and substituting it accordingly.

\item \textbf{Case 2: $\mathbb{E}\left[\hat p-0.5\right]<0$}
\begin{align}
 \omega_* & = \max_{\eta \in \mathcal{D_-}} \mathcal{C}(\eta\mathbb{E}\left[\hat p-0.5\right]+\eta\epsilon)  - \mathcal{C}^2 \eta^2\text{Var}\left[\hat p-0.5+\epsilon(\hat{p})\right] - \mathcal{C}^2\left(\eta\mathbb{E}[\hat p-0.5]+\eta \epsilon\right)^2 \\
 &= \max_{\eta \in \mathcal{D_-}} \mathcal{C}(-\eta|\mathbb{E}\left[\hat p-0.5\right]|+\eta\epsilon)  - \mathcal{C}^2 \eta^2\text{Var}\left[\hat p-0.5+\epsilon(\hat{p})\right] - \mathcal{C}^2\left(-\eta|\mathbb{E}[\hat p-0.5]|+\eta \epsilon\right)^2 \\
 & = \max_{\eta \in \mathcal{D}_-} -\mathcal{C} \eta \cdot \Delta_0 - \mathcal{C}^2 \eta^2 \left(\text{Var}\left[\hat p-0.5+\epsilon(\hat{p})\right] + \Delta_0 ^2\right) \\
 &= \frac{\Delta_0^2}{4(\text{Var}[\hat p-0.5+\epsilon(\hat{p})] + \Delta_0^2)} \geq \frac{\Delta_0^2}{4(1 + \Delta_0^2)},
\end{align}
where the last transitions are the same as in the first case.
\end{itemize}
Therefore, $\omega_* \geq \frac{\Delta_0^2}{4(1 + \Delta_0^2)}$. 

We now upper bound the tail term in~\eqref{eq:expected_tau} using Bernstein's inequality together with the same Case~1/Case~2 maximizer that attains the quadratic bound above. Specifically, define
\[
V :=
\begin{cases}
\operatorname{Var}\!\left[\hat p-0.5-\epsilon(\hat p)\right], & \text{if }\mathbb{E}[\hat p-0.5]>0,\\
\operatorname{Var}\!\left[\hat p-0.5+\epsilon(\hat p)\right], & \text{if }\mathbb{E}[\hat p-0.5]<0,
\end{cases}
\qquad
\mathcal D^\sharp :=
\begin{cases}
\mathcal D_+, & \text{if }\mathbb{E}[\hat p-0.5]>0,\\
\mathcal D_-, & \text{if }\mathbb{E}[\hat p-0.5]<0.
\end{cases}
\]
Let $\eta^\sharp \in \arg\max_{\eta\in\mathcal D^\sharp}\left\{\mathcal C \eta \Delta_0 - \mathcal C^2 \eta^2 (V+\Delta_0^2)\right\}$ and denote
\[
\omega^\sharp := \mathbb E\!\left[\ln\big(1+g_0(\hat p,\eta^\sharp)\big)\right].
\]
Then $\omega^\sharp = \frac{\Delta_0^2}{4(V+\Delta_0^2)}$ by the same quadratic maximization used above, and in particular $\omega^\sharp \ge \frac{\Delta_0^2}{4(1+\Delta_0^2)}$.

Next, apply the regret inequality (equation~\eqref{eq:martingale_reg}) with benchmark $\eta^\sharp$. Repeating the steps that led to~\eqref{eq:midterm_prob}, we obtain that for all $t\ge t_*^\sharp$,
\begin{align}
\label{eq:midterm_prob_sharp}
\mathbb{P}\left[S_t < \frac{1}{\alpha}\right] \leq \mathbb{P}\left[\ln S_t(\eta^\sharp) - \mathbb{E}[\ln S_t(\eta^\sharp)] < -\frac{t}{2}\omega^\sharp\right],
\end{align}
where $t_*^\sharp$ is any time such that
\begin{align}
\label{eq:midterm_sharp}
2\ln \left(1 + 4t\right) + \frac{1}{2} + \ln\left(\frac{1}{\alpha}\right) \leq \frac{t}{2}\omega^\sharp
\qquad\forall t\ge t_*^\sharp.
\end{align}
As before, a sufficient explicit choice is
\begin{equation}
\label{eq:tstar_choice_sharp}
t_*^\sharp := \left\lceil \frac{16}{\omega^\sharp}\ln\left(\frac{1}{\alpha\omega^\sharp}\right)\right\rceil.
\end{equation}

Denote
\[
\psi_t^\sharp := \ln(1+ g(\hat{p}_t,\eta^\sharp)) - \mathbb{E}[\ln(1 + g(\hat{p}_t,\eta^\sharp))].
\]
Then $\ln S_t(\eta^\sharp) - \mathbb{E}[\ln S_t(\eta^\sharp)] = \sum_{s=1}^t \psi_s^\sharp$, where $(\psi_s^\sharp)_{s\ge1}$ are i.i.d.\ mean-zero and satisfy
$|\psi_s^\sharp|\le \ln 3$ almost surely. Denote $\sigma_\sharp^2 := \operatorname{Var}(\psi_t^\sharp)$.
By Bernstein's inequality (equation~\eqref{eq:bern}), letting $c=\frac{\omega^\sharp}{2}$ yields that for all $t\ge t_*^\sharp$,
\begin{align}
\label{eq:bern_tail_exp}
\mathbb{P}\left[S_t < \frac{1}{\alpha}\right]
&\le
\exp\left(
-\frac{t(\omega^\sharp)^2}{8\sigma_\sharp^2 + \frac{4}{3}(\ln 3)\omega^\sharp}
\right).
\end{align}

Next, we upper bound $\sigma_\sharp^2$. Since $g(\hat p,\eta^\sharp)\in[-1/2,1/2]$, the function $x\mapsto \ln(1+x)$ is $2$-Lipschitz on $[-1/2,1/2]$, hence
\[
\sigma_\sharp^2
=
\operatorname{Var}\!\left(\ln(1+g(\hat p,\eta^\sharp))\right)
\le
4\,\operatorname{Var}\!\left(g(\hat p,\eta^\sharp)\right).
\]
Moreover, for the benchmark $\eta^\sharp$ defined above,
\[
\operatorname{Var}(g_0(\hat p,\eta^\sharp))
=
\mathcal C^2(\eta^\sharp)^2 V
=
\frac{\Delta_0^2 V}{4(V+\Delta_0^2)^2}
\le
\frac{\Delta_0^2}{4(V+\Delta_0^2)}
=
\omega^\sharp.
\]
Therefore,
\[
\sigma_\sharp^2 \le 4\omega^\sharp.
\]
Substituting $\sigma_\sharp^2 \le 4\omega^\sharp$ into~\eqref{eq:bern_tail_exp}, we obtain that for all $t\ge t_*^\sharp$,
\begin{align}
\label{eq:bern_tail_simplified}
\mathbb{P}\left[S_t < \frac{1}{\alpha}\right]
\le
\exp\left(
-\frac{t\omega^\sharp}{32 + \frac{4}{3}\ln 3}
\right).
\end{align}

Returning to~\eqref{eq:expected_tau} and splitting the sum at $t_*^\sharp$ yields
\begin{align}
\mathbb{E}[\tau]
&\le t_*^\sharp + \sum_{t=t_*^\sharp}^\infty \mathbb{P}\left[S_t < \frac{1}{\alpha}\right]
\le t_*^\sharp + \sum_{t=t_*^\sharp}^\infty \exp\left(
-\frac{t\omega^\sharp}{32 + \frac{4}{3}\ln 3}
\right).
\end{align}
Let $\lambda := \frac{\omega^\sharp}{32 + \frac{4}{3}\ln 3}$. Then the series above is geometric with ratio $e^{-\lambda}$ and
\[
\sum_{t=t_*^\sharp}^\infty e^{-\lambda t}
=
\frac{e^{-\lambda t_*^\sharp}}{1-e^{-\lambda}}
\le
\frac{1}{1-e^{-\lambda}}.
\]
Using $\exp(x)\ge 1+x$ for $x\ge0$ gives $e^{-\lambda}\le \frac{1}{1+\lambda}$ and thus
\[
1-e^{-\lambda}\ge \frac{\lambda}{1+\lambda},
\qquad
\frac{1}{1-e^{-\lambda}} \le \frac{1+\lambda}{\lambda} = \frac{1}{\lambda}+1.
\]
Therefore,
\begin{align}
\label{eq:Etau_intermediate}
\mathbb{E}[\tau]
&\le t_*^\sharp + \left(\frac{1}{\lambda}+1\right)
= t_*^\sharp + \left(32 + \frac{4}{3}\ln 3\right)\frac{1}{\omega^\sharp} + 1.
\end{align}

Finally, combining equation~\eqref{eq:tstar_choice_sharp} with~\eqref{eq:Etau_intermediate} and using the bound
\[
\omega^\sharp \ge \frac{\Delta_0^2}{4(1+\Delta_0^2)},
\]
we get that under the assumption of $k \rightarrow 0$:
\begin{align}
\mathbb{E}[\tau]
=
\mathcal{O}\left(\frac{1}{\Delta_0^2} \ln\left(\frac{1}{\alpha\Delta_0}\right) + \frac{1}{\Delta_0^2}\right).
\end{align}
\end{proof}

}
\section{The Effect of the Clipping Parameter}
\label{appdx:clipping}
To analyze the effect of the clipping parameter $C$ on the proposed test, we apply  our proposed test and the standard CTM with different values of $C$. We analyze both immediate change-point and delayed change-point scenarios, where we follow the experimental protocol for each scenario as detailed in Section~\ref{sec:synth}. 

Figure~\ref{fig:clipping} illustrates the power for both methods, where the left panel is for a delayed change-point of 200 samples, and the right panel is for an immediate one. As can be seen, without clipping ($C=0$), both methods suffer from wealth decay, resulting in lower power. However, higher clipping threshold ($C=0.2$) also degrades performance, as it prevents the martingale from capitalizing on the early stages of the shift. The power peaks at low non-zero values, with $C=0.1$ achieving the highest power and fastest detection across both scenarios. This justifies our choice of $C=0.1$,  which effectively balances the need to prevent wealth bleed while preserving sensitivity to shifts detection.

\begin{figure}[h]
\centering

\begin{subfigure}[b]{0.6\textwidth} 
    \includegraphics[width=\textwidth]{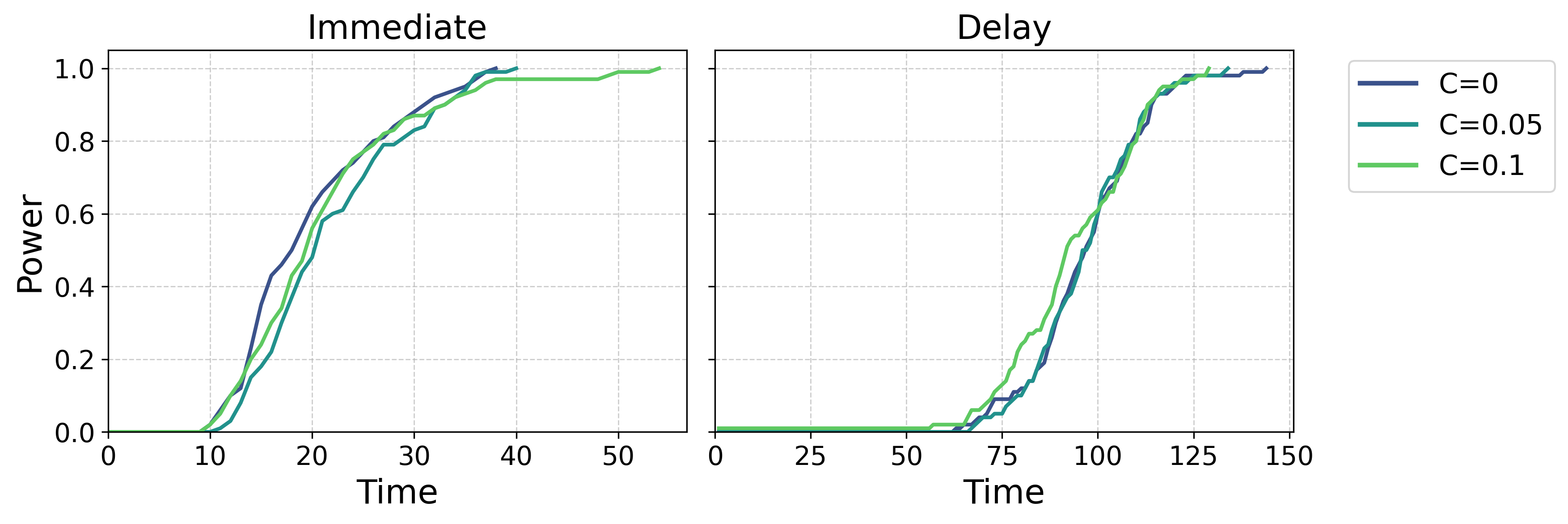}
\end{subfigure}

\caption{\textbf{Ablation study on the effect of the clipping parameter $C$.} Empirical power, evaluated over 100 repetitions, of our proposed method for different values of the clipping parameter $C$. \textbf{Left}: immediate change-point scenario. \textbf{Right:} a delayed change-point scenario, with a delay of $200$ samples. The results are presented after the delay occurs.
}
\label{fig:clipping}
\end{figure}

\section{Additional Figures for The Synthetic Experiments}
\label{appdx:synth_figs}

\begin{figure}[H]
\centering
\begin{subfigure}[b]{0.4\textwidth} 
    \includegraphics[width=\textwidth]{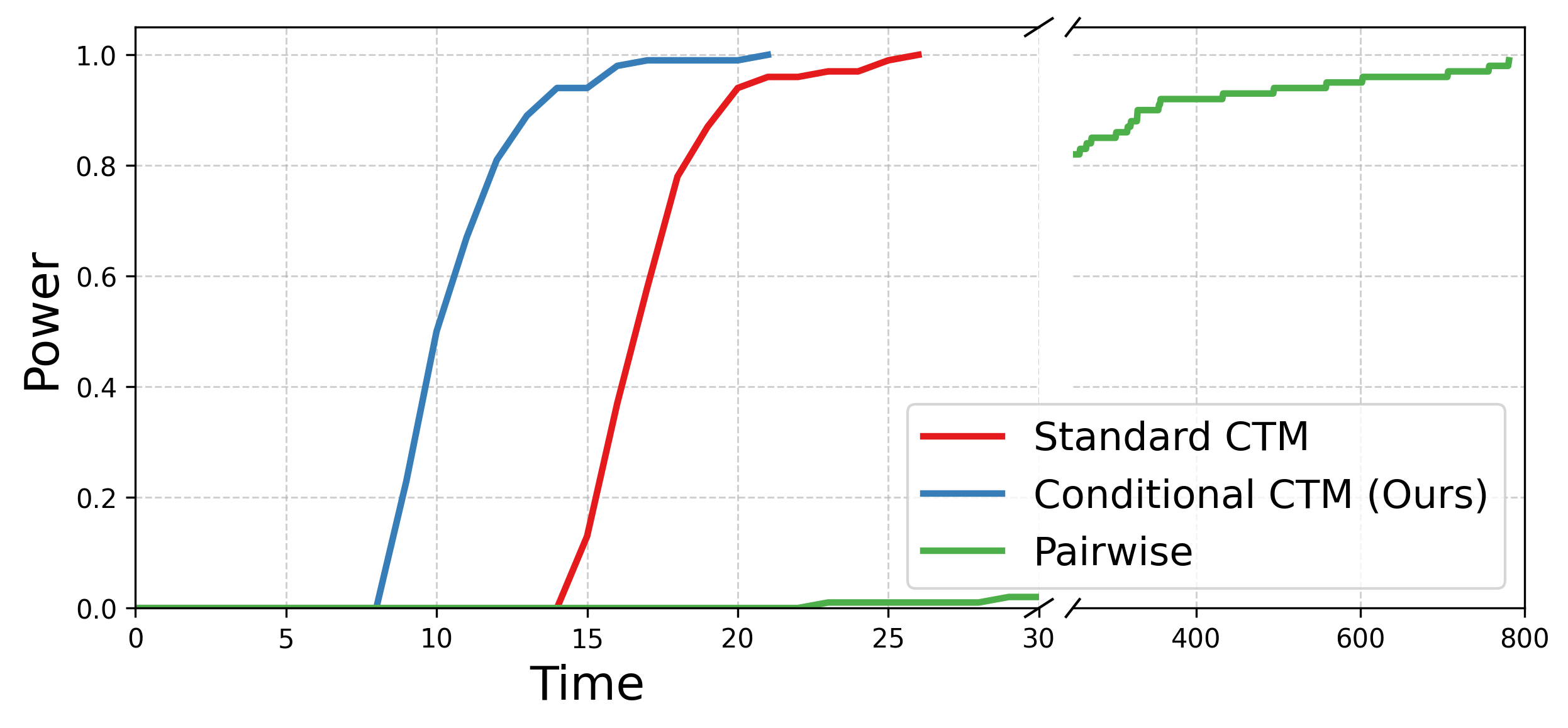}
    \label{fig:pairwise_sub}
\end{subfigure}

\caption{\textbf{Comparison to the pairwise betting approach}: empirical power, evaluated over 100 trials, of our proposed method, the standard CTM, and the pairwise betting approach \cite{saha2024testing}. The the time axis split to visualize the pairwise performance.}
\label{fig:pairwise}
\end{figure}

\section{Implementation Details of Experiments on ImageNet-C}
\label{sec:implementation}

\textbf{Model and datasets} For all experiments involving the ImageNet-C benchmark, we use a pre-trained Vision Transformer (\texttt{vit\_base\_patch16\_224})  model sourced from the \texttt{timm}. To simulate distribution shifts efficiently, we save the pre-computed entropy values for both the in-distribution holdout set and the shifted data, rather than performing raw image inference during the test-time adaptation phase. The shift experiments cover all 15 corruption types defined in the ImageNet-C benchmark, including noise, blur, weather, and digital corruptions.

\textbf{Algorithm hyperparameters} For all Imagenet-C experiments, we use a warmup phase of $N_{\text{warm}}=50$ examples that are used to initialize the betting procedure, updating internal state but not placing active bets, for both our method and the standard CTM. This warmup stabilizes the methods and reduces premature rejections when evidence is still weak. Unless stated otherwise for specific ablation studies, the default hyperparameters for our betting martingale (conditional CTM) are set as follows: 
\begin{itemize} 
    \item The significance level is set to $\alpha=0.05$. 
    \item For the Online Newton Step (ONS) optimizer, we set the diameter parameter $D=0.5$ and the constant $C=0.05$. 
    \item The smoothing parameter for the betting function is set to $10^{-6}$. 
\end{itemize}

\textbf{Experimental protocol} Besides $D_0$ which is available offline, each method processes one example at a time sequentially. After the warmup phase is done, updating the martingale values immediately after each sample. We run the test on $37,500$ examples for each test sequence. To ensure reproducibility, we explicitly seed the random number generators for Python, NumPy, and PyTorch. We report results aggregated over 10 seeds to account for variability in data ordering.
\end{document}